\documentclass{article}

\usepackage[final]{corl_2021} 
\usepackage[utf8]{inputenc} 
\usepackage[T1]{fontenc}    
\usepackage{hyperref}       
\usepackage{url}            
\usepackage{booktabs}       
\usepackage{amsfonts}       
\usepackage{nicefrac}       
\usepackage{microtype}      
\usepackage{amssymb}
\usepackage{float}
\usepackage{amsthm}

\usepackage{multirow,multicol}
\usepackage{caption}
\usepackage{subcaption}
\usepackage{wrapfig}

\usepackage{graphics} 
\usepackage{epsfig} 
\usepackage{times} 
\usepackage{amsmath} 
\usepackage{amssymb}  
\usepackage{bbding}
\usepackage{hyperref}
\usepackage{caption}
\usepackage{subcaption}
\usepackage{booktabs}
\usepackage{graphicx}
\usepackage[noend]{algorithm2e}
\usepackage{algcompatible}%

\newcommand{\expect}{\mathop{\mathbb E}}%

\let\oldnl\nl
\newcommand{\nonl}{\renewcommand{\nl}{\let\nl\oldnl}}

\SetAlFnt{\small}

\definecolor{PZH_color}{RGB}{138, 43, 226}

\definecolor{purple}{RGB}{190, 0, 65}

\definecolor{LQY_color}{RGB}{50, 205, 50}

\newtheorem{theorem}{Theorem}
\newtheorem{lemma}[theorem]{Lemma}
\newtheorem{assumption}{Assumption}
\usepackage{lipsum}

\newcommand\blfootnote[1]{%
  \begingroup
  \renewcommand\thefootnote{}\footnote{#1}%
  \addtocounter{footnote}{-1}%
  \endgroup
}

\title{
Safe Driving via Expert Guided Policy Optimization
}

%


\author{%
Zhenghao Peng\textsuperscript{$\dagger$*},
Quanyi Li\textsuperscript{$\mathsection$*},
Chunxiao Liu\textsuperscript{$\ddagger$}, 
Bolei Zhou\textsuperscript{$\dagger$} \\
\textsuperscript{$\dagger$} The Chinese University of Hong Kong,
\textsuperscript{$\ddagger$} SenseTime Research, \\\
\textsuperscript{$\mathsection$} Centre for Perceptual and Interactive Intelligence
}

\begin{document}
\maketitle

\begin{abstract}
When learning common skills like driving, beginners usually have domain experts standing by to ensure the safety of the learning process. We formulate such learning scheme under the Expert-in-the-loop Reinforcement Learning where a guardian is introduced to safeguard the exploration of the learning agent. While allowing the sufficient exploration in the uncertain environment, the guardian intervenes under dangerous situations and demonstrates the correct actions to avoid potential accidents. Thus ERL enables both exploration and expert's partial demonstration as two training sources. Following such a setting, we develop a novel Expert Guided Policy Optimization (EGPO) method which integrates the guardian in the loop of reinforcement learning. The guardian is composed of an expert policy to generate demonstration and a switch function to decide when to intervene. Particularly, a constrained optimization technique is used to tackle the trivial solution that the agent deliberately behaves dangerously to deceive the expert into taking over. Offline RL technique is further used to learn from the partial demonstration generated by the expert. Safe driving experiments show that our method achieves superior training and test-time safety, outperforms baselines with a substantial margin in sample efficiency, and preserves the generalizabiliy to unseen environments in test-time. 
Demo video and source code are available at: \url{https://decisionforce.github.io/EGPO/}.
\blfootnote{$^{*}$ Zhenghao Peng and Quanyi Li contribute equally to this work.}
\end{abstract}

\keywords{Safe Reinforcement Learning, Human-in-the-loop Machine Learning, Autonomous Driving}

\section{Introduction}

Reinforcement Learning (RL) shows promising results in human-interactive applications ranging from autonomous driving~\cite{kendall2019learning}, the power system in smart building~\cite{mason2019review}, to the surgical robotics arm~\cite{richter2019open}. However, training and test time safety remains as a great concern for the real-world applications of RL. This problem draws significant attention since the agent needs to explore the environment sufficiently in order to optimize its behaviors. It might be inevitable for the agent to experience dangerous situations before it can learn how to avoid them~\cite{chow2018lyapunov}, even the training algorithms contain sophisticated techniques to reduce the probability of failures~\cite{achiam2017constrained,stooke2020responsive,bharadhwaj2020conservative}.

We humans do not learn purely from trial-and-error exploration, for the sake of safety and efficiency.
In daily life, when learning some common skills like driving, we usually ensure the safety by involving domain expert to safeguard the learning process.
The expert not only demonstrates the correct actions but also acts as a \textit{guardian} to allow our own safe exploration in the uncertain environment.
For example as illustrated in Fig.\ref{figure:framework}, when learning to drive, the student with the learner's permit can directly operate the vehicle in the driver's seat while the instructor stands by.
When a risky situation happens, the instructor takes over the vehicle to avoid the potential accident. Thus the student can learn how to handle tough situations both from the exploration and the instructor's demonstrations.

In this work, we formulate such learning scheme with Expert-in-the-loop RL (ERL). As shown in the right panel of Fig.\ref{figure:framework}, ERL incorporates a \textit{guardian} in the interaction between agent and environment.
The guardian contains a \textit{switch} mechanism and an \textit{expert} policy. 
The switch decides to intervene the free exploration of the agent in the situations when the agent is conducting unreasonable behaviors or a potential critical failure is happening.
In those cases the expert takes over the main operation and starts providing demonstrations on solving the task or avoiding dangers.
Our setting of ERL extends previous works of Human-in-the-loop RL in two ways:
First, the guardian inspects the exploration all the time and actively intervenes if necessary, instead of passively advising which action is good~\cite{mandel2017add} or evaluating the collected trajectories after the agent rolling out~\cite{christiano2017deep,guan2020explanation}. This feature guarantees the safe exploration in training time.
Second, the guardian does not merely intervene the exploration and terminate the episode~\cite{saunders2018trial}, instead, it demonstrates to the agent the correct actions to escape risky states. Those demonstrations become effective training data to the agent.

\begin{figure}
\centering
\includegraphics[width=0.91\linewidth]{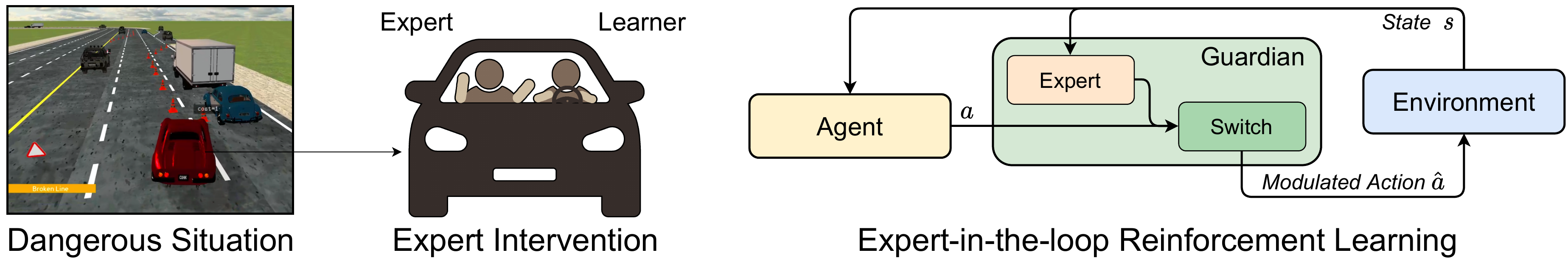}
\caption{
The expert intervenes the learner in dangerous situations. We model it through the Expert-in-the-loop RL scheme on the right panel where a guardian is introduced in the loop of the interaction between agent and environment.
}
\label{figure:framework}
 \vspace{-1em}
\end{figure}

Following the setting of ERL, we develop a novel method called \textit{\textbf{E}xpert \textbf{G}uided \textbf{P}olicy \textbf{O}ptimization (\textbf{EGPO})}.
EGPO addresses two challenges in ERL. First, the learning agent may abuse the guardian and consistently cause intervention so that it can exploit the high performance and safety of the expert.
To tackle this issue, we impose the Lagrangian method on the policy optimization to limit the intervention frequency.
Moreover, we apply the PID controller to update the Lagrangian multiplier, which substantially improves the dual optimization with off-policy RL algorithm.
The second issue is the partial demonstration data collected from the guardian. Since those data is highly off-policy to the learning agent, we introduce offline RL technique into EGPO to stabilize the training with the off-policy partial demonstration.
The experiments show that our method can achieve superior training safety while yielding a well-performing policy with high safety in the test time. 
Furthermore, our method exhibits better generalization performance compared to previous methods.

As a summary, the main contributions of this work are:
(1) We formulate the Expert-in-the-loop RL (ERL) framework that incorporates the guardian as a demonstrator as well as a safety guardian.
(2) We develop a novel ERL method called \textit{\textbf{E}xpert \textbf{G}uided \textbf{P}olicy \textbf{O}ptimization (\textbf{EGPO})} with a practical implementation of guardian mechanism and learning pipeline.
(3) Experiments show that our method achieves superior training and test safety, outperforms baselines with a large margin in sample efficiency, and generalizes well to unseen environments in test time.

\section{Related Work}

\textbf{Safe RL}.
Learning RL policy under safety constraints~\citep{garcia2015comprehensive,amodei2016concrete,bharadhwaj2020conservative} becomes an important topic in the community due to the safety concern in real-world applications.
Many methods based on constrained optimization have been developed, such as the
trust region methods~\cite{achiam2017constrained}, 
Lagrangian methods~\cite{achiam2017constrained,stooke2020responsive,calian2020balancing}, 
barrier methods~\cite{taylor2020learning,liu2020ipo}, 
Lyapunov methods~\cite{chow2018lyapunov, sikchi2020lyapunov}, \textit{etc}.
Another direction is based on the safety critic, where an additional value estimator is learned to predict cost, apart from the primal critic estimating the discounted return~\cite{bharadhwaj2020conservative,srinivasan2020learning}.
\citet{saunders2018trial} propose HIRL, a scheme for safe RL requiring extra manual efforts to demonstrate and train an imitation learning decider who intervenes the endangered agent.
Differently, in our work the guardian does not terminate the exploration but instead continues the trajectory with the expert demonstrating the proper actions to escape risky states.
However, majority of the aforementioned methods hold the issue that only the upper bound of failure probability of the learning agent can be guaranteed theoretically,
but there is no mechanism to explicitly ensure the occurrence of the critical failures.
\citet{dalal2018safe} assume that cost function is the linear transformation of the action and thus equip the policy network with a safety layer that can modulate the output action as an absolutely safe action.
The proposed EGPO utilizes the guardian to ensure safe exploration without assuming the structure of the cost function.

\textbf{Learning from Demonstration}. 
Many works consider leveraging the collected demonstrations to improve policy.
Behavior Cloning (BC)~\cite{widrow1964pattern} and Inverse RL~\cite{sun2021adversarial} uses supervised learning to fit the policy function or the reward function respectively to produce the same action as the expert.
GAIL~\cite{ho2016generative,Sun_2020_corl,huanglearning2020} and SQIL~\cite{reddy2019sqil} ask the learning agent to execute in the environment and collect trajectories to evaluate the divergence between the agent and the expert. This exposes the agent to possibly dangerous states.
DAgger~\cite{ross2011reduction} periodically queries the expert for new demonstrations and is successfully applied to extensive domains~\cite{kelly2019hg,zhang2016query}.
Recently, offline RL draws wide attention which learns policy from the dataset generated by arbitrary policies~\cite{levine2020offline, fujimoto2018off,wu2019behavior}.
The main challenge of offline RL is the out-of-distribution (OOD) actions~\cite{fujimoto2018off}.
Conservative Q-Learning (CQL)~\cite{kumar2020conservative} addresses the impact of OOD actions by learning a conservative Q-function to estimate the lower bounds of true Q values.
In this work, we use CQL technique to improve the training on the trajectories with partial demonstrations given by the guardian.

\textbf{Human-in-the-loop RL}.
An increasing number of works focus on incorporating human into the training loop of RL. The human is responsible for evaluating the trajectories sampled by the learning agent~\cite{christiano2017deep,ibarz2018reward,guan2020explanation}, or being a consultant to guide which action to take when the agent requests~\cite{mandel2017add}.
Besides, the human can also actively monitor the training process, such as deciding whether to terminate the episode if potential danger is happening~\cite{abel2017agent,saunders2018trial}. 
Human-Gated DAgger (HG-DAgger)~\cite{kelly2019hg} and Expert Intervention Learning (EIL)~\cite{spencer2020learning} utilize experts to intervene exploration and carry the agent to safe states before giving back the control.
However, it is much less explored in previous works on how to (1) optimize the agent to minimize interventions, (2) efficiently utilize the data generated in free exploration and (3) learn from the takeover trajectories given by the expert.
Addressing these aforementioned challenges, our work is derived from the Human-in-the-loop framework where the guardian plays the role of human expert to provide feedback to the learning agent.

\section{Expert Guided Policy Optimization}
\label{section:egpo}

Extending the setting of Human-in-the-loop RL, we frame the \textit{Expert-in-the-loop RL} (ERL) that incorporates the guardian to ensure training safety and improve efficiency. 
We develop a novel method called
\textit{\textbf{E}xpert \textbf{G}uided \textbf{P}olicy \textbf{O}ptimization (\textbf{EGPO})} to implement the guardian mechanism.

\subsection{Overview of the Guardian Mechanism}
\label{section:overview-of-egpo}

\begin{wrapfigure}{r}{0.45\textwidth}
\centering
\includegraphics[width=1\linewidth]{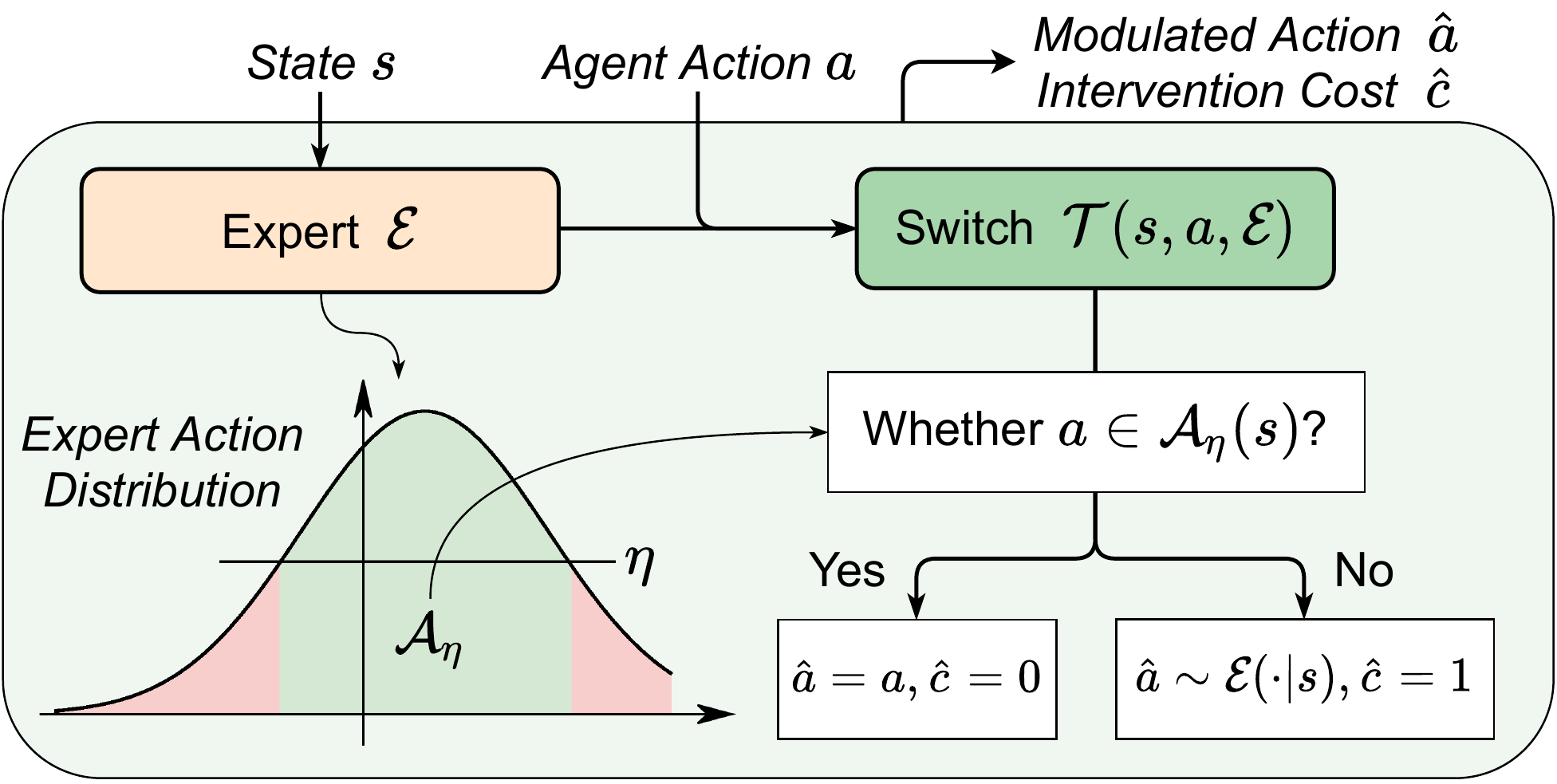}
\caption{
Flowchart of the guardian mechanism.
}
\label{figure:detailed-structure-of-the-guardian}
\end{wrapfigure}

Taking learning to drive as a motivating example, generally speaking, the student driver learns the skills of driving from the instructor through two approaches:
(1) \textit{Student learns from instructor's demonstrations}.
At the early stage of training, the student observes the demonstrations given by the instructor and learns rapidly by imitating the behaviors. Besides, the student also learns how the expert tackles dangerous situations;
(2) \textit{Student in driver's seat operates the vehicle in an exploratory way while the instructor serves as guardian}.
The student can explore freely until the instructor conducts takeover of the vehicle in dangerous situations.
Therefore, the student learns to drive from both the imitation of the expert and the free exploration.

Based on this motivating example, we have the framework of \textit{Expert-in-the-loop RL} (ERL). 
As illustrated in the right panel of Fig.~\ref{figure:framework}, we introduce the component of \textit{guardian} on top of the conventional RL scheme, which resembles the instructor who not only provides high-quality demonstrations to accelerate the learning, but also safeguards the exploration of agent in the environment.
In the proposed EGPO method, the guardian is composed of two parts: an expert and a switch function.

The \textbf{expert} policy $\mathcal{E}: a^E\sim \mathcal{E}(\cdot|s)$ can output safe and reliable actions $a^E$ in most of the time. Besides, it can provide the probability of taking action $a$ produced by the agent: $\mathcal{E}(a|s)\in [0, 1]$. This probability reflects the agreement of the expert on the agent's action, which serves as an indicator for intervention in the switch function. We assume the access to such well-performing expert policy. 
The \textbf{switch} is another part of the guardian, which decides under what state and timing the expert should intervene and demonstrate the correct actions to the learning agent. 
As shown in Fig.~\ref{figure:detailed-structure-of-the-guardian}, the switch function $\mathcal T$ considers the agent action as well as the expert and outputs the modulated action $\hat{a}$ fed to the environment and the intervention occurrence $\hat{c}$ indicating whether the guardian is taking over the control:
\begin{equation}
    \mathcal T (s, a, \mathcal{E}) = (\hat{a}, \hat{c}) =
    \begin{cases}
      
      (a, 0), & \text{if } a \in \mathcal A_\eta(s) \\
      (a^E\sim \mathcal{E}(\cdot|s), 1), & \text{otherwise,}
    \end{cases}
\end{equation}
wherein $\eta$ is the confidence level on the expert action probability and 
$\mathcal A_\eta(s) = \{a\in\mathcal A: \mathcal E(a|s) \ge \eta\}$
is the confident action space of the expert.
The switch mechanism leads to the formal representation of the behavior policy:
\begin{equation}
\label{equation:the-behavior-policy}
  \hat{\pi}(a|s) 
  = 
  {\pi}(a|s) \mathbf 1_{a \in \mathcal A_\eta(s)}
  +
  \mathcal{E}(a|s)F(s),
\end{equation}
wherein 
$F(s) = \mathop{\int}_{a'\notin \mathcal A_\eta(s)}{\pi}(a'|s) da'$
is a function denoting the probability of the agent choosing an action that will be rejected by the switch.
Emulating how human drivers judge the risky situations, we rely on the expert’s evaluation of the safety during training, instead of any external objective criterion.

We derive the guarantee on the training safety from the introduction of guardian.
We first have the assumption on the expert:
\begin{assumption}[Failure probability of the expert]
\label{assumption:expert-failure-rate}
For all state, the step-wise probability of expert producing unsafe action is bounded by a small value $\epsilon < 1$:
$
\expect_{a\sim \mathcal E(\cdot|s)} I(s, a) \le \epsilon
$, wherein 
$
I (s, a) \in \{0, 1\}
$ 
is a Boolean denotes whether next state $s' \sim \mathcal P(s'|s, a)$ is an ground-truth unsafe state.
\end{assumption}

We use the expected cumulative probability of failure to measure the expected risk encountered by the behavior policy:
$
\hat{V} = \expect_{s_0} \hat{V}(s_0) = 
\expect_{s_0, \tau\sim P(\hat{\pi})} \sum_{t=0} \gamma^t I(s_t, a_t)
$
wherein $P(\hat{\pi})$ refers to the trajectory distribution deduced by the behavior policy.
We propose the main theorem of this work:

\begin{theorem}[Upper bound of the training risk]
\label{theorem:main}
The expected cumulative probability of failure $\hat{V}$ of the behavior policy $\hat{\pi}$ in EGPO is bounded by the step-wise failure probability of the expert $\epsilon$ as well as the confidence level $\eta$:
$$
\hat{V} 
\le
\cfrac{\epsilon}{1 - \gamma}(
1 + \cfrac{1}{\eta} + \cfrac{\gamma}{1 - \gamma}K'_\eta
),
$$
wherein $
K'_\eta = 
\max_s \mathop{\int}_{a \in \mathcal A_\eta(s)} da
$ has \textbf{negative correlation} to $\eta$.
\end{theorem}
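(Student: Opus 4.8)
The plan is to bound $\hat V$ by comparing the behavior policy $\hat\pi$ against the expert $\mathcal E$ through the performance difference lemma, treating the failure indicator $I(s,a)$ as a per-step cost. First I would control the expert's own risk: since Assumption~\ref{assumption:expert-failure-rate} gives $\expect_{a\sim\mathcal E(\cdot|s)}I(s,a)\le\epsilon$ at every state, the expert's cumulative failure value satisfies $V^{\mathcal E}(s)\le \epsilon/(1-\gamma)$ uniformly in $s$ by summing the geometric series. This supplies the leading $\tfrac{\epsilon}{1-\gamma}$ term and, just as importantly, a uniform bound on $Q^{\mathcal E}$ that I will reuse.

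The core step is to expand $\hat V - V^{\mathcal E}$ with the performance difference lemma, which writes it as the discounted sum over states visited by $\hat\pi$ of $\expect_{a\sim\hat\pi(\cdot|s)}A^{\mathcal E}(s,a)$, where $A^{\mathcal E}=Q^{\mathcal E}-V^{\mathcal E}$ is the expert advantage. Here I would exploit the specific structure of $\hat\pi$ in Eq.~\eqref{equation:the-behavior-policy}: substituting $\hat\pi(a|s)=\pi(a|s)\mathbf 1_{a\in\mathcal A_\eta(s)}+\mathcal E(a|s)F(s)$, the expert-takeover branch contributes $F(s)\expect_{a\sim\mathcal E}A^{\mathcal E}(s,a)=0$, because the advantage integrates to zero against the expert's own action distribution. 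The whole correction therefore collapses to $\int_{\mathcal A_\eta(s)}\pi(a|s)A^{\mathcal E}(s,a)\,da$, i.e.\ only the agent's confident actions can add risk on top of the expert baseline. This cancellation is the crux of the argument.

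It then remains to bound this residual integral at a fixed state. Using $V^{\mathcal E}\ge 0$ I would replace $A^{\mathcal E}$ by $Q^{\mathcal E}(s,a)=I(s,a)+\gamma\expect_{s'}V^{\mathcal E}(s')$ and split into an immediate-failure piece and a future-risk piece. For the immediate piece, the defining inequality $\mathcal E(a|s)\ge\eta$ on $\mathcal A_\eta(s)$ together with the expert bound gives $\int_{\mathcal A_\eta(s)}I(s,a)\,da\le \tfrac1\eta\int_{\mathcal A_\eta(s)}\mathcal E(a|s)I(s,a)\,da\le \epsilon/\eta$, so bounding the agent density pointwise yields the $\tfrac1\eta$ contribution. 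For the future piece, inserting $V^{\mathcal E}\le\epsilon/(1-\gamma)$ and integrating the agent density over the confident set produces $\tfrac{\gamma\epsilon}{1-\gamma}\int_{\mathcal A_\eta(s)}da\le \tfrac{\gamma\epsilon}{1-\gamma}K'_\eta$, explaining the volume term $K'_\eta$ and its shrinkage as $\eta$ grows. Finally I would feed the state-independent per-step bound $\epsilon/\eta+\tfrac{\gamma\epsilon}{1-\gamma}K'_\eta$ back into the discounted sum, pick up a factor $\sum_t\gamma^t=\tfrac1{1-\gamma}$, and add the leading $\tfrac{\epsilon}{1-\gamma}$ to recover the stated bound.

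I expect the main obstacle to be making the density manipulations rigorous: converting $\int_{\mathcal A_\eta}\pi\,I\,da$ and $\int_{\mathcal A_\eta}\pi\cdot(\text{const})\,da$ into the Lebesgue-volume quantities $\epsilon/\eta$ and $K'_\eta$ requires a pointwise control on the agent's action density (e.g.\ $\pi(a|s)\le 1$, or a uniform density bound absorbed into $K'_\eta$). Without such a bound the confident-set integrals cannot be turned into volumes, so I would state this density assumption explicitly; the rest of the argument --- the geometric-series bound on $V^{\mathcal E}$ and the advantage cancellation --- is then routine.
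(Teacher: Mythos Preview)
Your proposal is correct and follows essentially the same route as the paper: performance difference lemma applied to the failure indicator, cancellation of the expert-takeover branch via $\expect_{a\sim\mathcal E}A^{\mathcal E}(s,a)=0$, splitting the residual integral over $\mathcal A_\eta(s)$ into an immediate piece bounded by $\epsilon/\eta$ (your argument is exactly the paper's Lemma~\ref{lemma:coverage-of-the-unsafe-actions}) and a future piece bounded via $V^{\mathcal E}\le\epsilon/(1-\gamma)$ and the volume $K'_\eta$. The paper silently uses the pointwise bound $\pi(a|s)\le 1$ in both steps, so your instinct to flag that density assumption explicitly is well placed rather than a deviation from their argument.
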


When $\epsilon$ is fixed, increasing the confidence level will shrink the upper bound of $\hat{V}$, leading to better training safety. The proof is given in the Appendix.

In the implementation, the actions from agent are firstly modulated by the guardian and the safe actions will be applied to the environment.
We update the learning agent with off-policy RL algorithm.
Meanwhile, we also leverage a recent offline RL technique to address the partial demonstrations provided by the guardian and further improve the learning stability.
The policy learning is presented in Sec.~\ref{section:learning-from-takeover}.
Since the intervention from guardian indicates the agent has done something wrong, we also optimize the policy to reduce intervention frequency through the constrained optimization in Sec.~\ref{section:intervention-minimization}.

\subsection{
Learning Policy from Exploration and Partial Demonstration
}
\label{section:learning-from-takeover}

The proposed EGPO method can work with most of the RL algorithms to train the safe policy since the guardian mechanism does not impose any assumption on the underlying RL methods.
In this work, we use an off-policy actor-critic method Soft Actor-Critic (SAC)~\cite{haarnoja2018soft} to train the agent. 
The method utilizes two neural networks including a Q network estimating the state-action value: $Q_\phi$, and a policy network: $\pi_\theta$. $\phi$ and $\theta$ are the parameters.
The training algorithm alternates between the policy evaluation and the policy improvement in each iteration. 
The policy evaluation process updates the estimated Q function by minimizing the L2 norm of the entropy regularized TD error:
\begin{equation}
\label{equation:critic-objective}
\begin{aligned}
y(r_t, s_{t+1}) & =
r_t +
\gamma \expect_{a_{t+1}\sim 
\pi_\theta(\cdot|s_{t+1})}
\ 
[
	Q_{\bar{\phi}} (s_{t+1}, a_{t+1}) - \alpha \log \pi_\theta(a_{t+1} | s_{t+1})
],
~\\ %
L_{Q}(\phi) & =
\cfrac 12 
\expect_{(s_t, a_t, r_t, s_{t+1})\sim \mathcal B}
[
	y(r_t, s_{t+1}) - Q_{{\phi}}(s_t, a_t)
]^2.
\end{aligned}
\end{equation}
Here $\mathcal B$ is the replay buffer, $\bar{\phi}$ is the delayed parameters, $\alpha$ is a temperature parameter. 
On the other hand, the policy improvement objective, which should be minimized, is written as:
\begin{equation}
\label{eq:policy-improvement-objective}
L_{\pi}(\theta) = 
-
\expect_{s_t\sim \mathcal B, a_t\sim \pi_\theta(\cdot|s_t)} [
Q_\phi(s_t, a_t)
-
\alpha \log \pi_\theta(a_t|s_t)
].
\end{equation}

Since we use a safety-ensured mixed policy $\hat{\pi}$ to explore the environment, part of the collected transitions contain the actions from the expert.
This part of data comes as \textit{partial demonstration} denoted as $\mathcal B^{\mathcal{E}}$, which leads to the distributional shift problem.
Many works have been proposed to overcome this problem, such as the V-trace in the on-policy algorithm IMPALA~\cite{espeholt2018impala}, the advantage-weighted actor-critic~\cite{peng2019advantage} in the off-policy algorithm, and many other offline RL methods~\cite{wu2019behavior, fujimoto2018off, kumar2020conservative}.
To train with the off-policy data produced by the guardian, we adopt the recent Conservative Q-Learning (CQL)~\cite{kumar2020conservative}, known as an effective offline RL method, in our \textit{Learning from Partial Demonstration (LfPD)} setting. The objective to update Q function becomes:
\begin{equation}
\label{equation:critic-objective-with-CQL}
L^{\text{LfPD}}_{Q}(\phi) = 
\beta (
\underbrace{
\expect_{s\sim \mathcal B^{\mathcal{E}}, a\sim{\pi_\theta}}[Q_\phi (s, a)] 
}_{\text{1st Term}}
-
\underbrace{ 
\expect_{s\sim \mathcal B^{\mathcal{E}}, a\sim\mathcal{E}}[Q_\phi(s, a)] 
}_{\text{2nd Term}}
) + 
\underbrace{
\cfrac 12 
\expect_{(s, a)\sim \mathcal B}
[
	y(r_t, s_{t+1}) - Q_{{\phi}}(s_t, a_t)
]^2
}_{\text{3rd Term}}
.
\end{equation}

Note that the 1st Term and 2nd Term are expectations over only the partial demonstration $\mathcal B^{\mathcal{E}}$, instead of the whole batch $\mathcal B$.
In the partial demonstration data, the 1st Term reduces the Q values for the actions taken by the agent, while the 2nd Term increases the Q values of expert actions.
The 3rd Term is the original TD learning objective in Eq.~\ref{equation:critic-objective}.
CQL reflects such an idea: be conservative to the actions sampled by the agent, and be optimistic to the actions sampled by the expert. 
Minimizing Eq.~\ref{equation:critic-objective-with-CQL} can lead to a better and more stable Q function. 
In next section, we discuss another hurdle in the training and propose a solution for intervention minimization.

\subsection{
Intervention Minimization via Constrained Optimization
}
\label{section:intervention-minimization}

The guardian intervenes the exploration of the agent once it behaves dangerously or inefficiently.
However, if no measure is taken to limit intervention frequency, the learning policy is prone to heavily rely on the guardian. It deceives guardian mechanism by always taking dangerous actions so the guardian will take over all the time.
In this case, the learning policy receives high reward under the supervision of guardian but fails to finish tasks independently.

In this section, we consider the intervention minimization as a constrained optimization problem and apply the Lagrangian method into the policy improvement process. Concretely, the optimization problem becomes:
$
\label{equation:intervention-minimization-as-constrained-optimization-problem}
\theta^*
=
\mathop{\arg\max}_{\theta} 
\mathbb{E}_{\pi_\theta}
[
    \sum_{t=0} \gamma^t r_t], 
    \text{ s.t. } 
    \mathbb{E}_{\pi_\theta} [\sum_{t=0} \gamma^t \hat{c}_t] \le C
$ wherein $C$ is the intervention frequency limit in one episode.
The Lagrangian dual form of the above problem becomes an unconstrained optimization problem with a penalty term:
\begin{equation}
\label{eq:lagrangian}
\theta^*
=
\mathop{\arg\max}_{\theta}\ \mathop{\min}_{\lambda\ge0} 
\expect_{\tau\sim
\hat{\pi}
}
\{
    (\sum_{t=0} \gamma^t r_t) -\lambda [(\sum_{t=0} \gamma^t \hat{c}_t)-C]
\}
,
\end{equation}
where $\lambda\ge0$ is known as the Lagrangian multiplier. The optimization over $\theta$ and $\lambda$ can be conducted iteratively between policy gradient ascent and stochastic gradient descent (SGD).

We additionally introduce an intervention critic $Q^C_{\psi}$ to estimate the cumulative intervention occurrence $\sum_{t'=t} \gamma^{(t-t')} \hat{c}_{t'}$. 
This network can be optimized following Eq.~\ref{equation:critic-objective} with the reward replaced by the intervention occurrence.
intervention minimization objective $L^\lambda_\pi$ can be written as:
\begin{equation}
\label{eq:intervention-minimization-objective}
L_{\pi}^{\lambda}(\theta) = \expect_{s_t\sim \mathcal B, a_t \sim \pi_\theta(\cdot|s_t)} [Q^C_\psi(s_t, a_t) - C],
\end{equation}
Now we can update the policy by combining the policy improvement objective Eq.~\ref{eq:policy-improvement-objective} with the intervention minimization objective Eq.~\ref{eq:intervention-minimization-objective} to the final objective:
\begin{equation}
\label{equation:final-policy-objective}
L'_{\pi} (\theta) = L_{\pi}(\theta) + \lambda L_{\pi}^{\lambda}(\theta).
\end{equation}

Conducting SGD on Eq.~\ref{equation:final-policy-objective} w.r.t. $\theta$ can improve the return while reduce the intervention.

The SAC with the Lagrangian method has been proposed by \citet{ha2020learning}. From the attempt to reproduce the result in our task, we find that directly optimizing the Lagrangian dual in the off-policy RL algorithm SAC is highly unstable. 
\citet{stooke2020responsive} analyze that optimizing Lagrangian multiplier brings oscillations and overshoot, which destabilizes the policy learning.
This is because the update of the multiplier is an integral control from the perspective of control theory. 
Introducing the extra proportional and derivative control to update the Lagrangian multiplier can reduce the oscillations and corresponding cost violations.
We thus adopt a PID controller to update $\lambda$ and form the responsive intervention minimization as:
\begin{equation}
\label{equation:lambda-update-rule}
\lambda \gets 
K_p \delta + 
K_i \int_{i=1}^{k} \delta di + 
K_d \cfrac{\delta}{di},
\quad
\text{wherein}
\quad
\delta = 
\expect_{\tau} [\sum_{t=0} \hat{c}_t] - C
,
\end{equation}
where we denote the training iteration as $i$, and $K_p$, $K_i$, $K_d$ are the hyper-parameters. 
Optimizing $\lambda$ with Eq.~\ref{eq:lagrangian} reduces to the proportional term in Eq.~\ref{equation:lambda-update-rule}, while the integral and derivative terms compensate the accumulated error and overshoot in the intervention occurrence.
We apply the PID controller in EGPO, as well as the baseline SAC-Lagrangian method in the experiments.
Empirical results validate that PID control on $\lambda$ brings stabler learning and robustness to hyperparameter.

\section{Experiments}
\label{sec:exp}

\begin{figure}[!t]
\centering
\includegraphics[width=0.91\textwidth]{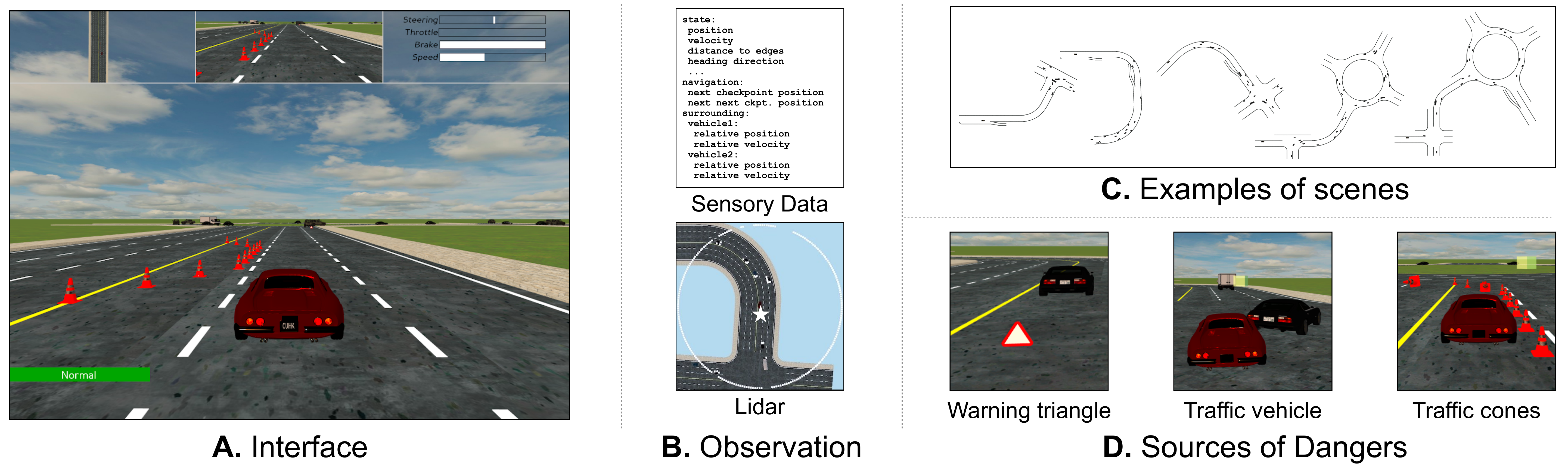}
\caption{
\textbf{A}. The interface of the environment from MetaDrive~\cite{li2021metadrive}.
\textbf{B}. The observations feeding to the target vehicle.
\textbf{C}. The examples of the scenes we use in training and test. 
\textbf{D}. The three events creating costs: crashing with warning triangle, cone or other vehicles. $+1$ cost is given once those events occur.
}
 \vspace{-1em}
\label{figure:environment}
\end{figure}

\subsection{Experimental Settings}

\textbf{Environment}.
We evaluate the proposed method and baselines in the recent driving simulator MetaDrive~\cite{li2021metadrive}.
The environment supports generating an unlimited number of scenes via the Procedural Generation. 
Each of the scenes includes the vehicle agent, the complex road network, the dense traffic flow, and many obstacles such as cones and warning triangles, as shown in Fig.~\ref{figure:environment}\textbf{D}. 
The task for the agent is to steer the target vehicle with low-level signals, namely acceleration, brake and steering, to reach the predefined destination. 
Each collision to the traffic vehicles or obstacles yields $+1$ environmental cost.
The episodic cost in test time is the measurement on the safety of a policy, which is independent to whether the expert is used during training.
The reward function only contains a dense driving reward and a sparse terminal reward. The dense reward is the longitudinal movement toward destination in Frenet coordinates.
The sparse reward $+20$ is given when the agent arrives the destination.
We build our testing benchmark based on MetaDrive rather than other RL environments like the safety gym~\cite{safety_gym_Ray2019} because we target on the application of autonomous driving and the generalization of the RL methods. Different to those environments, MetaDrive can generate an infinite number of driving scenes which allows evaluating the generalization of different methods by splitting the training and test sets in the context of safe RL.

\textbf{Split of training and test sets}. Different from the conventional RL setting where the agent is trained and tested in the same fixed environment, we focus on evaluating the generalization through testing performance.
We split the scenes into the training set and test set with 100 and 50 different scenes respectively.
At the beginning of each episode, a scene in the training or test set is randomly selected.
After each training iteration, we roll out the learning agent \textit{without guardian} in the test environments and record the percentage of successful episodes over multiple evaluation episodes, called \textit{success rate}. 
Besides, we also record the episodic cost given by the environment and present it in following tables and figures.

\textbf{Training expert policy}.
 In our experiment, the expert policy is a stochastic policy trained from the Lagrangian PPO~\cite{safety_gym_Ray2019} with batch size as large as 160,000 and a long training time. 
 To further improve the performance of the expert, we have reward engineering by doubling the cost and adding complex penalty to dangerous actions.

\textbf{Implementation details}. We conduct experiments using RLLib~\cite{liang2018rllib}, a distributed learning system which allows large-scale parallel experiments. Generally, we host 8 concurrent trials in an Nvidia GeForce RTX 2080 Ti GPU. Each trial consumes 2 CPUs with 8 parallel rollout workers. 
Each trial is trained over roughly $200,000$ environmental steps, which corresponds to $11$ hours of individual driving experience.
All experiments are repeated 5 times with different random seeds. Information about other hyper-parameters is given in Appendix.

\subsection{\vspace{-1em}Results}

\begin{table}[!t]
\begin{minipage}[!b]{0.6\linewidth}
\centering
\begin{small}
\caption{The test performance of different approaches.
}
\label{tab:main-results}
\begin{tabular}{@{}ccccc@{}}
\toprule
Category & Method & \shortstack{Episodic\\Return} &  \shortstack{Episodic\\Cost} & \shortstack{Success\\Rate}  \\
\toprule
\multirow{1}*{\shortstack{\textit{Expert}}} &
\textit{PPO-Lag}	& \textit{392.38 {\tiny $\pm$99.47}} & \textit{{1.26\tiny $\pm$0.57}} & \textit{0.86{\tiny $\pm$0.05}} \\\midrule
\multirow{2}*{\shortstack{RL}} 
& SAC-RS	& 346.49 	{\tiny $\pm$16.51} &	8.68 	{\tiny $\pm$3.34} &	0.68 	{\tiny $\pm$0.10} \\ 
& PPO-RS  & 294.10 {\tiny $\pm$22.28}  &	3.93 {\tiny $\pm$4.19}  &	0.41{\tiny $\pm$0.09}   \\ \midrule
\multirow{3}*{\shortstack{Safe RL}}
& SAC-Lag	& 333.90 	{\tiny $\pm$19.00} &	2.21 	{\tiny $\pm$1.08} &	0.65 	{\tiny $\pm$0.14} \\
& PPO-Lag	& 288.04 	{\tiny $\pm$53.72} &	1.03 {\tiny $\pm$0.34} &	0.43 	{\tiny $\pm$0.21} \\
& CPO & 194.06 {\tiny $\pm$108.86} & 1.71 {\tiny $\pm$1.02} & 0.21 {\tiny $\pm$0.29} \\
\midrule
\multirow{1}*{\shortstack{Offline RL}}
& CQL	& 373.95 	{\tiny $\pm$8.89} &	0.24 	{\tiny $\pm$0.30} &	0.72 	{\tiny $\pm$0.11} \\\midrule
\multirow{3}*{\shortstack{IL}}
& BC	& 362.18 	{\tiny $\pm$6.39} &		\textbf{0.13 	{\tiny $\pm$0.17}} &	0.57 	{\tiny $\pm$0.12} \\
& Dagger&	346.16 	{\tiny $\pm$22.62} &	0.67 	{\tiny $\pm$0.23} &	0.66 	{\tiny $\pm$0.12} \\
& GAIL	&309.66 	{\tiny $\pm$12.47} &	0.68 	{\tiny $\pm$0.20} &	0.60 	{\tiny $\pm$0.07} \\
\midrule
\multirow{1}*{\shortstack{{Ours}}} & {EGPO}	& \textbf{388.37	{\tiny $\pm$10.01}} &0.56 	{\tiny $\pm$0.35} &	\textbf{0.85 	{\tiny $\pm$0.05}} \\
\bottomrule
\end{tabular}%
\end{small}
\vspace{2em}
\end{minipage}\hfill
\begin{minipage}[!b]{0.35\linewidth}
\centering
\includegraphics[width=0.94\linewidth]{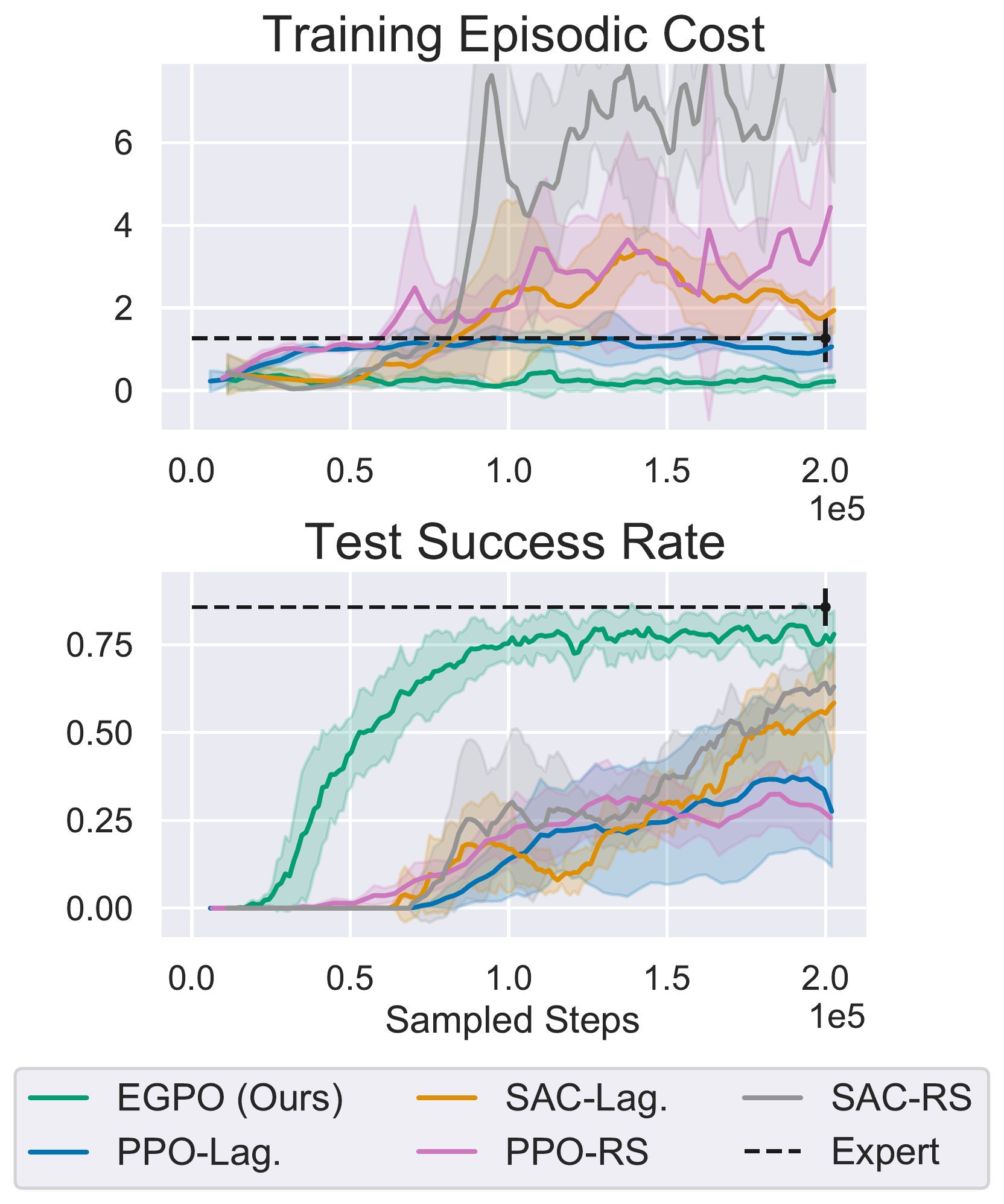}
\vspace{0.1em}
\captionof{figure}{Comparison between our method and safe RL baselines.}
\label{figure:main-results}
\end{minipage}
\vspace{-2em}
\end{table}

\begin{figure}[!t]
\centering
\begin{minipage}{0.48\linewidth}
\centering
\begin{subfigure}[b]{0.46\textwidth}
\centering
\includegraphics[width=\linewidth]{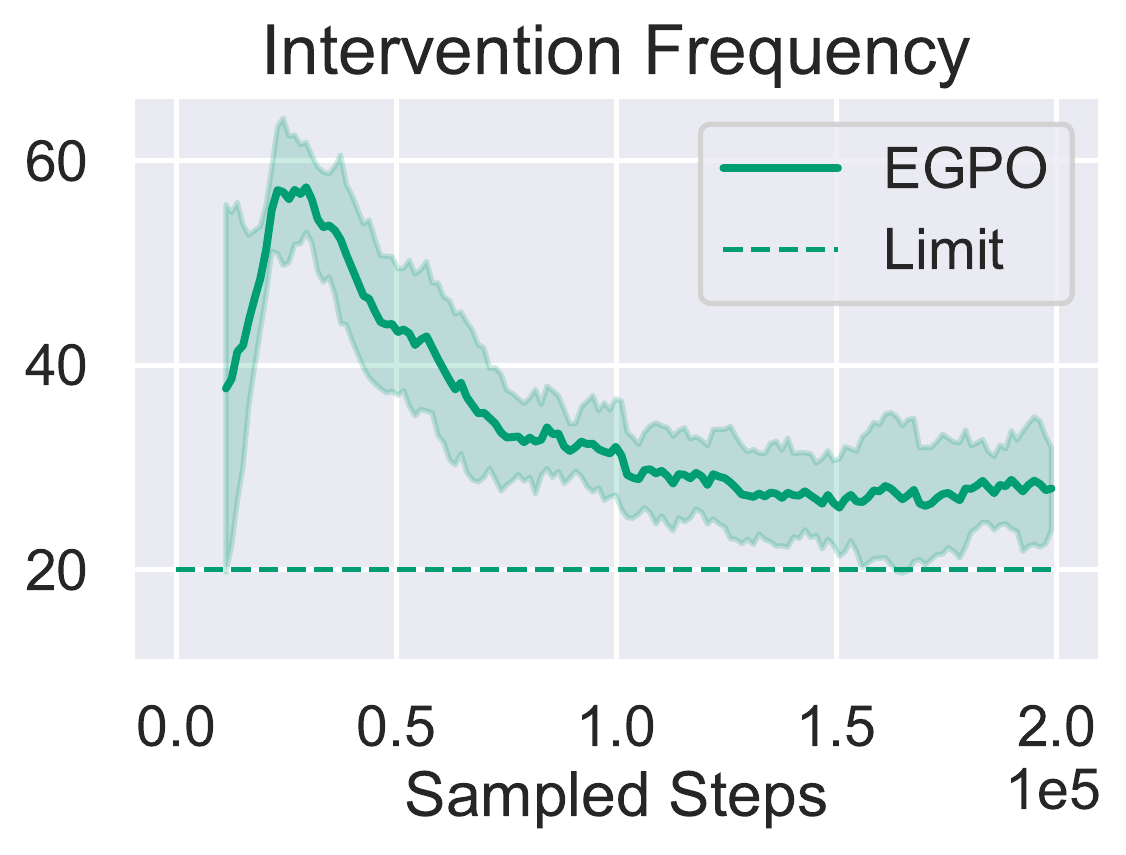}
\end{subfigure}
\begin{subfigure}[b]{0.46\textwidth}
\centering
\includegraphics[width=\linewidth]{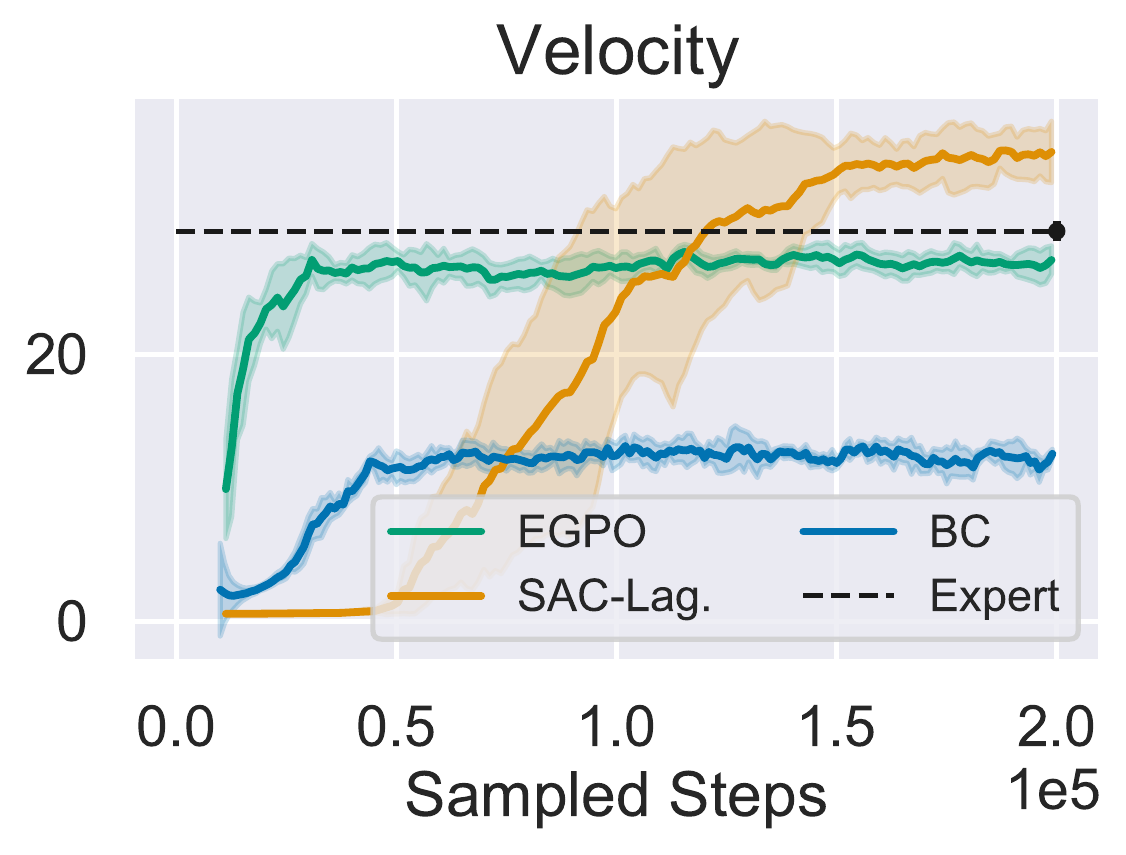}
\end{subfigure}
\caption{The learning dynamics of EGPO and baseline methods during training.}
\label{figure:learning-dynamics}
\end{minipage}
\hfill
\begin{minipage}{0.48\linewidth}
\centering
\begin{subfigure}[b]{0.46\linewidth}
\centering
\includegraphics[width=\linewidth]{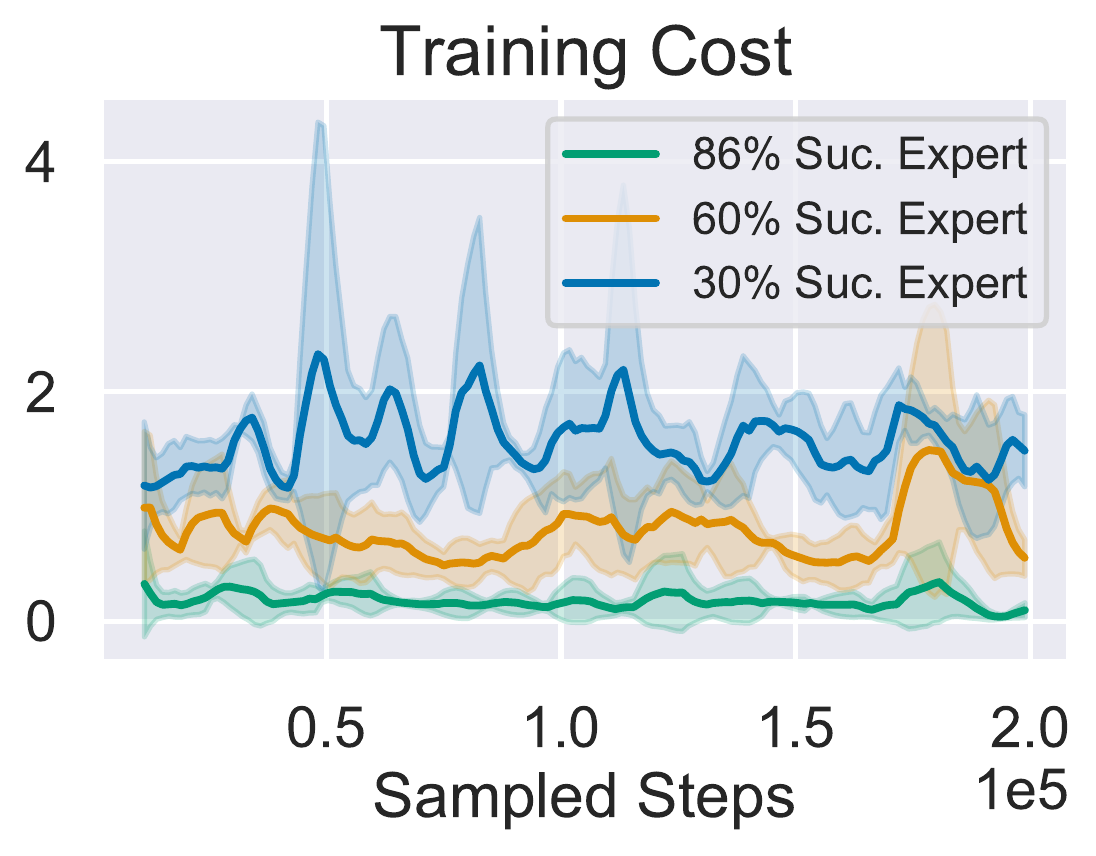}
\end{subfigure}
\begin{subfigure}[b]{0.46\linewidth}
\centering
\includegraphics[width=\linewidth]{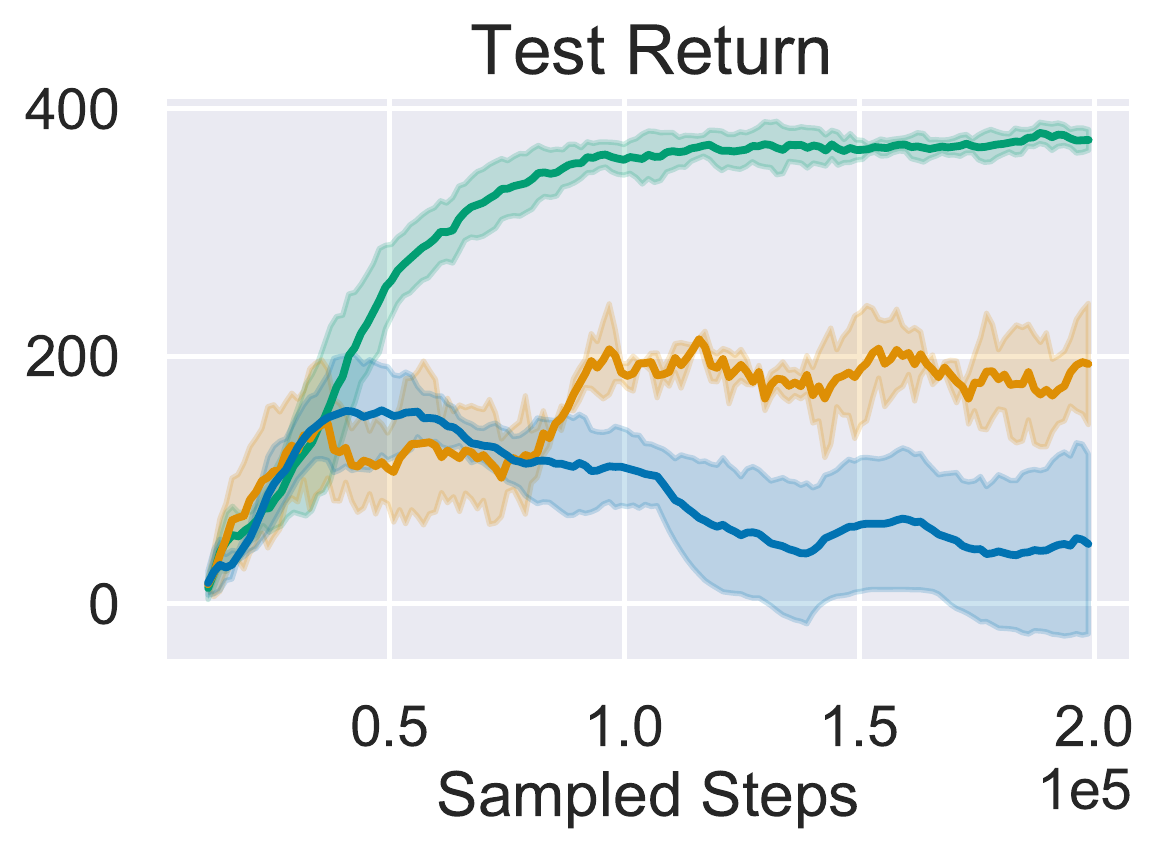}
\end{subfigure}
\caption{The curves of EGPO agents when varying the quality of experts.}
\label{figure:varying-expert}
\end{minipage}
\vspace{-1.5em}
\end{figure}

\textbf{Compared to RL and Safe RL baselines}.
We evaluate two RL baselines PPO~\cite{schulman2017proximal} and SAC~\cite{haarnoja2018soft} with the reward shaping (RS) method that considers negative cost as auxiliary reward.
We also evaluate three safe RL methods, namely the Lagrangian version of PPO and SAC~\cite{stooke2020responsive, ha2020learning} and Constrained Policy Optimization (CPO)~\cite{achiam2017constrained}.
As shown in Fig.~\ref{figure:main-results} and Table~\ref{tab:main-results}, EGPO shows superior training and test time safety compared to the baselines.
During training, EGPO limits the occurrence of dangers, denoted by the episodic cost, to almost zero.
Noticeably, EGPO achieves lower cost compared to the expert policy.  
EGPO also learns rapidly and results to a high test success rate.

\textbf{Compared to Imitation Learning and Offline RL baselines}.
We use the expert to generate $250,000$ steps of transitions from training environments and use this dataset to train with Behavior Cloning (BC), GAIL~\cite{ho2016generative}, DAgger~\cite{ross2011reduction}, and offline RL method CQL~\cite{kumar2020conservative}. 
As shown in Table~\ref{tab:main-results}, EGPO yields better test time success rate compared to the imitation learning baselines.
BC outperforms ours in test time safety, but we find that BC agent learns conservative behaviors resulting in poor success rate and low average velocity to 15.05 km/h, while EGPO runs normally in 27.52 km/h, as shown in Fig.~\ref{figure:learning-dynamics}.

\begin{table}[!t]
\begin{minipage}[!b]{0.67\linewidth}
\centering
\begin{small}
\caption{
The test performance when ablating components in EGPO.
}
\label{tab:ablation-studies}
\begin{tabular}{@{}lccc@{}}
\toprule 
Experiment &  \shortstack{Episodic\\Return} &  \shortstack{Episodic\\Cost} & \shortstack{Success\\Rate}  \\
\midrule
\textbf{(a)} W/ rule-based switch  &339.10{\tiny $\pm$11.41}&0.91{\tiny $\pm$0.60}& 0.57{\tiny $\pm$0.09}
\\
\textbf{(b)} W/o intervention min. & 38.31{\tiny $\pm$3.61}& 1.00{\tiny $\pm$0.00}&0.00{\tiny $\pm$0.00}
\\
\textbf{(c)} W/o PID in SAC-Lag. &338.80 {\tiny $\pm$16.23}&0.59{\tiny $\pm$0.40}&0.67{\tiny $\pm$0.10}\\
\textbf{(d)} W/o CQL loss &378.00 {\tiny $\pm$6.77} &0.43 {\tiny $\pm$0.54} & 0.80 {\tiny $\pm$0.08}
\\
\textbf{(e)} W/o environmental reward & 379.91{\tiny $\pm$7.87} &\textbf{0.43{\tiny $\pm$0.26}}&0.79{\tiny $\pm$0.06}
\\\midrule
EGPO	& \textbf{388.37	{\tiny $\pm$10.01}} &0.56 	{\tiny $\pm$0.35} &	\textbf{0.85 	{\tiny $\pm$0.05}} 
\\
\bottomrule
\end{tabular}%
\end{small}
\vspace{2em}
\end{minipage}\hfill
\begin{minipage}[!b]{0.3\linewidth}
\centering
\includegraphics[width=0.66\linewidth]{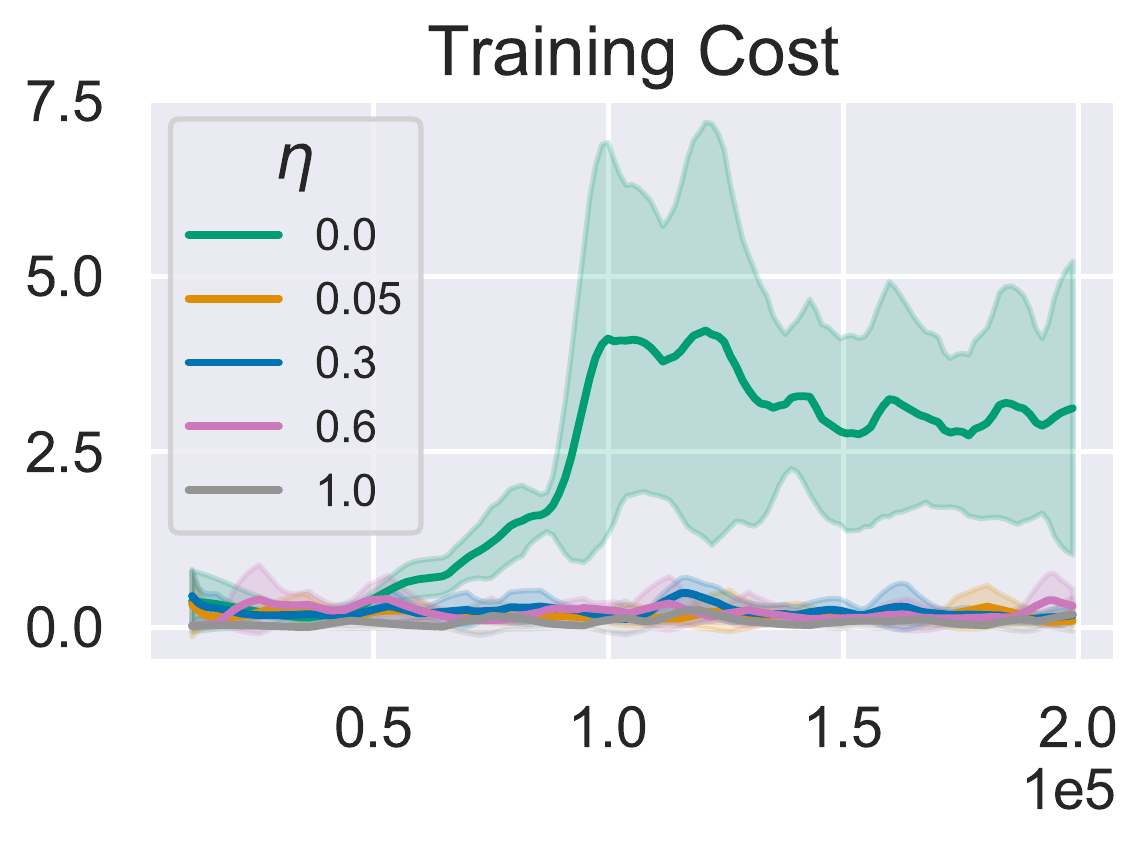}
\includegraphics[width=0.66\linewidth]{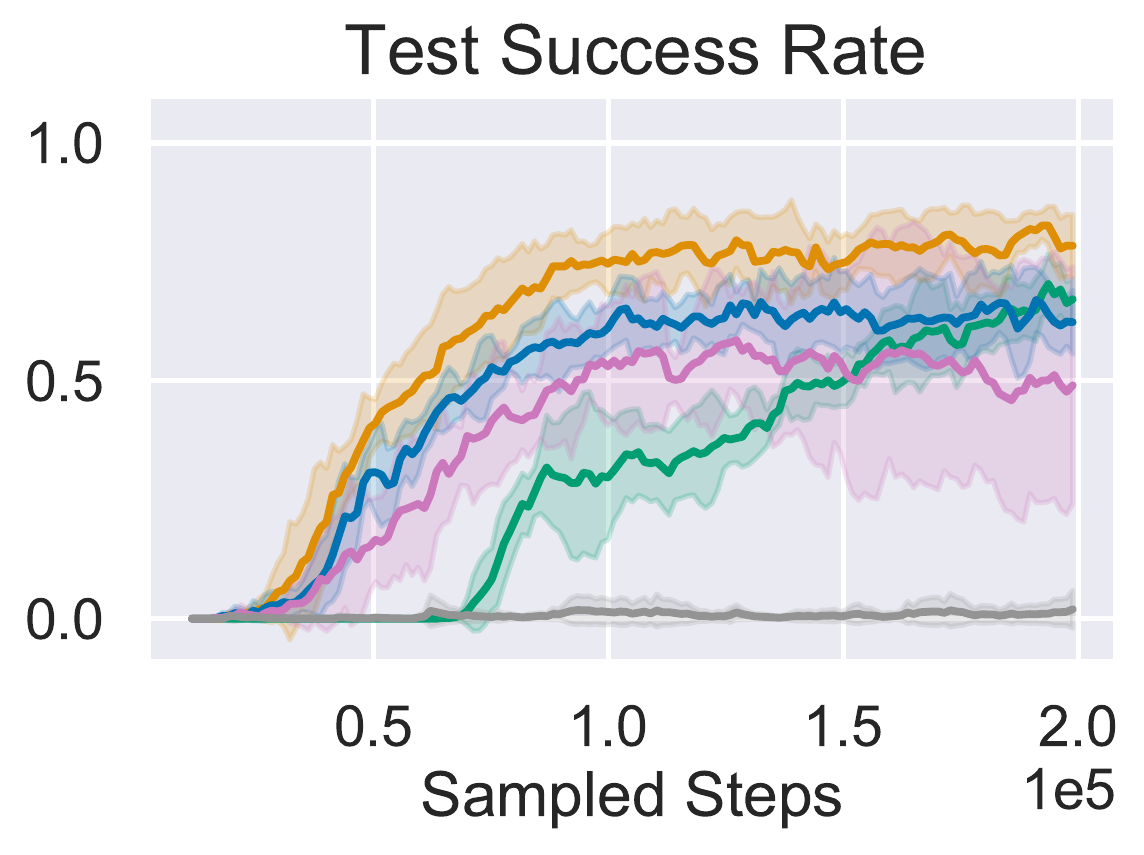}
\captionof{figure}{Ablation study on $\eta$.}
\label{figure:varying-eta}
\end{minipage}
\vspace{-3em}
\end{table}

\textbf{Learning dynamics}. 
We denote the intervention frequency by the average episodic intervention occurrence $\expect_\tau \sum_{t=0} \hat{c}_t$. 
As illustrated in Fig.~\ref{figure:learning-dynamics}, at the beginning of the training, the guardian is involved more frequently to provide driving demonstrations and prevent agent from entering dangerous states. 
After acquiring primary driving skills, the agent is prone to choosing actions that are more acceptable by guardian and thus the takeover frequency decreases.

\subsection{Ablation Studies}
\label{section:ablation-studies}

\textbf{The impact of expert quality}.
To investigate the impact of the expert if its quality is not as good as the well-performing expert used in the main experiments, we involve two expert policies with $60\%$ and $30\%$ test success rate into the training of EGPO.
Those two policies are retrieved from the intermediate checkpoints when training the expert. 
The result of training EGPO with the inferior experts is shown in Fig.~\ref{figure:varying-expert}. We can see that improving the expert's quality can reduce the training cost. 
This result also empirically justifies the Theorem~\ref{theorem:main} where the training safety is bounded by the expert safety.
Besides, we find better expert leads to better EGPO agent in term of the episodic return.
We hypothesize this is because using premature policies as expert will make the switch function produce chaotic intervention signals that mystifies the exploration of the agent.

\textbf{The impact of confidence level}. The confidence level $\eta$ is a hyper-parameter. As shown in Fig.~\ref{figure:varying-eta}, we find that when $\eta > 0.05$, the performance decreases as $\eta$ increases. This is because higher $\eta$ means less freedom of free exploration. In the extreme case where $\eta = 1.0$, all data is collected by the expert. In this case, the intervention minimization multiplier $\lambda$ will goes to large value, which damages the training.
When $\eta = 0.0$, the whole algorithm reduces to vanilla SAC.

\textbf{Ablations of the guardian mechanism.}
\textbf{(a)} We adopt a rule-based switch designed to validate the effectiveness of the statistical switch in Sec.~\ref{section:overview-of-egpo}. The intervention happens when the distance to the nearby vehicles or to the boundary of road is too small.
We find that the statistical switch performs better than rules. 
This is because it is hard to enumerate manual rules that cover all possible dangerous situations.
\textbf{(b)} Removing the intervention minimization technique, the takeover frequency becomes extremely high and the agent learns to drive directly toward the boundary of the road. This causes consistent out-of-the-road failures, resulting in the zero success rate and $1$ episodic cost.
This result shows the importance of the intervention minimization in Sec.~\ref{section:intervention-minimization}.
\textbf{(c)} We find that removing the PID controller on updating $\lambda$ in intervention minimization causes a highly unstable training. It is consistent with the result in \cite{stooke2020responsive}. 
We therefore need to use PID controller to optimize $\lambda$ in EGPO and SAC-Lag.
\textbf{(d)} Removing CQL loss in Eq.~\ref{equation:critic-objective-with-CQL} damages the performance. 
We find this ablation reduces the training stability.
\textbf{(e)} We set the environment reward always to zero in EGPO, so that the only supervision signal to train the policy is the intervention occurrence. 
This method outperforms IL baselines with a large margin, but remains lower than EGPO in the return and success rate. 
This suggests EGPO can be turned into a practical online Imitation Learning method.

\textbf{Human-in-the-loop experiment}. To demonstrate the potential of EGPO, we conduct a human-in-the-loop experiment, where a human expert supervises the learning progress of the agent. The evaluation result suggests that EGPO can achieve 90\% success rate with merely 15,000 environmental steps of training, while SAC-Lag takes 185,000 steps to achieve similar results. EGPO also outperforms Behavior Cloning method in a large margin, while BC even consumes more human data. Please refer to Appendix for more details.

\section{Conclusion}
We develop an Expert Guided Policy Optimization method for the Expert-in-the-loop Reinforcement Learning. The method incorporates the guardian mechanism in the interaction of agent and environment to ensure safe and efficient exploration. The experiments on safe driving show that the proposed method can achieve training and test-time safety and outperform previous safe RL and imitation baselines. In future work we will explore the potential of involving human to provide feedback in the learning process.

\acknowledgments{
This project was supported by the Centre for Perceptual and Interactive Intelligence (CPII) Ltd under the Innovation and Technology Fund.
}

\begin{small}
\bibliography{cite.bib}  
\end{small}





\appendix

\section{Rationale on the Evaluation}

\textbf{Evaluation on driving simulator.}  
The major focus of this work is the safety. 
However, in the domain of autonomous driving, evaluating systems' safety in real robot is costly and even unavailable.
Thus we benchmark the safety performance of baseline methods and the proposed EGPO method in driving simulator.
Using driving simulator to prototype allows us to focus on the algorithmic part of the problem.
The exact reproducible environments and vehicles allow safe and effective evaluation of different safe training algorithms.
In this work, we conduct experiments on the driving simulator MetaDrive~\cite{li2021metadrive} instead of CARLA because we want to evaluate the generalization of the different safe exploration methods.  
Different to the fixed maps in CARLA, MetaDrive uses procedural generation to synthesize an unlimited number of driving maps for the split of training and test sets, which is useful to benchmark the generalization capability of different reinforcement learning in the context of safe driving.
MetaDrive also supports scattering diverse obstacles in the driving scenes such as fixed or movable traffic vehicles, traffic cones and warning triangles.
The simulator is also extremely efficient and flexible. 
The above unique features of MetaDrive driving simulator enables us to develop new algorithms and benchmark different approaches. We intend to validate and extend the proposed method with real data in the following two ways.

\textbf{Extension to the human-in-the-loop framework}. 
We are extending the proposed method to replace the pre-trained policy in the guardian with real human. 
A preliminary experiment is provided in Appendix \ref{appendix:human-in-the-loop-section}.
We invite human expert to supervise the real-time exploration of the learning agent with hands on the steering wheel. 
When dangerous situation is going to happen, the human guardian takes over the vehicle by pressing the paddle and steering the wheel.
Such trajectories will be explicitly marked as ``intervention occurred''.
EGPO can incorporate the data generated by either a virtual policy or human being. Therefore, EGPO can be applied to such human-in-the-loop framework directly.
We are working on further improvement of the sample efficiency of the proposed method to accommodate the limited budget of human intervention. 

\textbf{Extension to the mobile robot platform}.
We design the workflow to immigrate EGPO to real robot in future work.
Our system includes several components:
(1) a computer controlling the vehicle remotely and training the agent with EGPO;
(2) a human expert steering vehicle and watching the images from camera on the robot;
and (3) an UGV robot simulating a full-scale vehicle (as shown in Fig.~\ref{figure_appendix:extension}).
During exploration, the on-board processor receives the low-level actions from human and queries the policy network for agent's action. 
Then the on-board processor executes the action on the robot and receives new sensory data. 
The data is recorded and used to train the agent. 
EGPO algorithm can train such real-world robot based on the above workflow.

\textbf{To summarize}, the essential ideas proposed in the work, such as \textit{expert as guardian}, \textit{intervention minimization}, \textit{learning from partial demonstration}, are sufficiently evaluated through the safe driving experiments in the driving simulator. With on-going efforts, we are validating our method with real data from human-in-the-loop framework and extending our method for the real-world mobile robot experiments.

\begin{figure}[H]
\centering
\includegraphics[width=\linewidth]{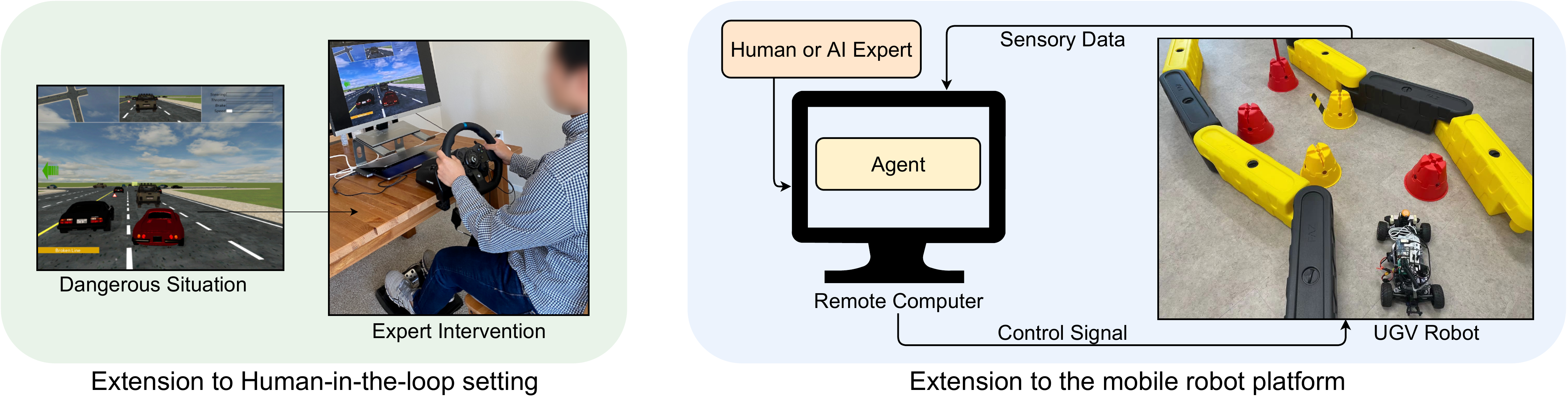}
\caption{We extend the proposed EGPO to Human-in-the-loop setting and real mobile robot platform.}
\label{figure_appendix:extension}
\end{figure}

\section{Preliminary Human-in-the-loop Experiment}
\label{appendix:human-in-the-loop-section}

To further demonstrate the capacity of the proposed framework, in this experiment, a human staff supervises the learning progress of the agent in a single training map. The expert takes over once he/she feels necessary by pressing the paddle in the wheel. At this time, an intervention cost is yielded and the action sequences of the expert are recorded and fed into the replay buffer.

Table~\ref{tab:human-in-the-loop-result} captures the result of this experiment. We find that EGPO with a human expert can achieve a high success rate in merely 15,000 environmental steps, while SAC-Lagrangian (with PID update) takes 185,000 steps to achieve similar results. We also ask the expert to generate 15,000 steps demonstrations (note that in EGPO experiment, only a small part of the 15,000 steps is given by the expert) and train a BC agent based on those demonstrations. However, BC fails to learn a satisfactory policy.
This experiment shows the applicability of the proposed framework even with human experts.

\begin{table}[H]
\centering
\caption{Human-in-the-loop experiment results}
\label{tab:human-in-the-loop-result}
\begin{tabular}{@{}ccccc@{}}
\toprule
Experiment                           &
\shortstack{Total\\Training Cost}
& 
Test Reward      
& 
Test Cost     
& 
\shortstack{Test\\Success Rate}
\\
\midrule
Human expert \textbf{(20 episodes)}           & -             & 219.50 	{\tiny $\pm$39.53}   & 0.30 {\tiny $\pm$0.550}  & 0.95              \\
\midrule
Behavior Cloning        & -             & 33.21 {\tiny $\pm$5.46}     & 0.990 {\tiny $\pm$0.030} & 0.000 {\tiny $\pm$0.000}     \\
PPO-Lagrangian \textbf{(200K steps)}          & 285.1        & 197.76 {\tiny $\pm$7.90}    & 0.427 {\tiny $\pm$0.043} & 0.598 {\tiny $\pm$0.029}     \\
SAC-Lagrangian  \textbf{(185K steps)}         & 452.5        & 221.381 {\tiny $\pm$7.90}   & 0.060 {\tiny $\pm$0.049} & 0.940 {\tiny $\pm$0.049}     \\
\midrule
\shortstack{EGPO (with human expert)
\\
\textbf{(15K steps)} 
}
& 6.14        & 221.058 {\tiny $\pm$32.562} & 0.120 {\tiny $\pm$0.325} & 0.900 {\tiny $\pm$0.300}    \\
\bottomrule
\end{tabular}
\end{table}

\section{Proof of Main Theorem}

In this section, we derive the upper bound of the discounted probability of failure of EGPO, showing that we can bound the training safety with the guardian.

\textit{\textbf{Notations.}}
Before starting, we firstly recap and describe the notations.
The switch function used in this work is:
\begin{equation}
    \mathcal T (s, a, \mathcal{E}) = (\hat{a}, \hat{c}) =
    \begin{cases}
      (a, 0), & \text{if } a \in \mathcal A_\eta(s) \\
      (a^E \sim \mathcal{E}(\cdot|s), 1), & \text{otherwise.}
    \end{cases}
\end{equation}
Therefore, at a given state, we can split the action space into two parts: where intervention will happen or will not happen if we sample action in it. 
We denote the \textit{confident action space} as $\mathcal A_\eta(s) = \{a: \mathcal E(a|s) \ge \eta \}$, which is related to the expert as well as $\eta$.
We also define the ground-truth indicator $I$ denoting whether the action will lead to unsafe state. This unsafe state is determined by the environment and is not revealed to learning algorithm:
\begin{equation}
I (s, a) = 
\begin{cases}
  1, & \text{if }s' = \mathcal P(s'|s, a) \text{ is an unsafe state,}
  \\
  0, & \text{otherwise.}
\end{cases}
\end{equation}

Therefore, at a given state $s$ the step-wise probability of failure for arbitrary policy $\pi$ is $\expect_{a\sim \pi(\cdot|s)} I(s, a) \in [0, 1]$.

Now we denote the \textit{cumulative discounted probability of failure} as $V^{\pi}(s_t) = \expect_{\pi} \sum_{t'=t}\gamma^{t'-t} I(s_{t'}, a_{t'})$, counting for the chance of entering dangerous states in current time step as well as in future trajectories deduced by the policy $\pi$.
We use $V^{\mathcal{E}} = \expect_{\tau\sim \mathcal{E}} V^{\mathcal{E}}(s_0)$ to denote the expected cumulative discounted probability of failure of the expert $\mathcal{E}$.

For simplicity, we can consider the actions post-processed by the guardian mechanism during training are sampled from a mixed policy $\hat{\pi}$, whose action probability can be written as:
\begin{equation}
\begin{aligned}
  \hat{\pi_\theta}(a|s) 
  & = 
  {\pi_\theta}(a|s) \mathbf 1_{a \in \mathcal A_\eta(s)}
  +
  \mathcal{E}(a|s)
  \mathop{\int}_{a'\notin \mathcal A_\eta(s)}
  {\pi_\theta}(a'|s) da' ~\\
  & = 
  {\pi_\theta}(a|s) \mathbf 1_{a \in \mathcal A_\eta(s)}
  +
  \mathcal{E}(a|s)F(s).
\end{aligned}
\end{equation}

Here the second term captures the situation that the learning agent takes arbitrary action $a'$ that triggers the expert to take over and chooses the action $a$. 
For simplicity, we use a shorthand
$F(s) = \mathop{\int}_{a'\notin \mathcal A_\eta(s)}  {\pi_\theta}(a'|s) da'$
.

Following the same definition as $V^{\mathcal{E}}$, we can also write the expected cumulative discounted probability of failure of the behavior policy as: $\hat{V} = \expect_{\tau\sim \hat{\pi}} \hat{V}(s_0)=\expect_{\hat{\pi}} \sum_{t=0}\gamma^{t} I(s_{t}, a_{t})$.

\textit{\textbf{Assumption.}}
Now we introduce one important assumption on the expert.

\begin{assumption}
	For all states, the step-wise probability of expert producing unsafe action is bounded by a small value $\epsilon < 1$:
\begin{equation}
	\expect_{a\sim \mathcal E(\cdot|s)} I(s, a) \le \epsilon  .
\end{equation}
\end{assumption}

This assumption does not impose any constrain on the structure of the expert policy.

\textit{\textbf{Lemmas.}}
We propose several useful lemmas and the correspondent proofs, which are used in the main theorem.
\begin{lemma}[The performance difference lemma]
\label{lemma:performance-difference-lemma}
\begin{equation}
\hat{V} 
= 
V^{\mathcal{E}} + 
\cfrac{1}{1-\gamma}
\ \expect_{s\sim P_{\hat{\pi}}} \ 
\expect_{a\sim \hat{\pi}} \ 
[ A^{\mathcal E}(s, a) ].
\end{equation}
\end{lemma}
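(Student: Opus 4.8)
The plan is to recognize this as the classical performance difference lemma (Kakade--Langford), instantiated with the cost signal $I(s,a)$ in place of a reward and comparing the behavior policy $\hat\pi$ against the expert $\mathcal{E}$ as the baseline. First I would make explicit the supporting definitions that the statement implicitly relies on but which the notation section has not yet introduced: the expert's cost action-value $Q^{\mathcal E}(s,a) = I(s,a) + \gamma\,\expect_{s'\sim \mathcal P(\cdot|s,a)} V^{\mathcal E}(s')$ and the expert advantage $A^{\mathcal E}(s,a) = Q^{\mathcal E}(s,a) - V^{\mathcal E}(s)$, where $V^{\mathcal E}$ is the cumulative discounted probability of failure already defined above. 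I would also fix that $P_{\hat\pi}$ denotes the \emph{normalized} discounted state-visitation distribution induced by $\hat\pi$, namely $P_{\hat\pi}(s) = (1-\gamma)\sum_{t\ge 0}\gamma^t \Pr(s_t = s \mid \hat\pi)$, because the prefactor $\tfrac{1}{1-\gamma}$ appearing in the statement is precisely the normalizing constant of this measure.

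The core of the argument is a telescoping computation. I would start from the expected discounted sum of expert-advantages accumulated along a trajectory rolled out under $\hat\pi$,
\[
\expect_{\tau\sim\hat\pi}\Big[\sum_{t\ge0}\gamma^t A^{\mathcal E}(s_t,a_t)\Big],
\]
and substitute $A^{\mathcal E}(s,a) = I(s,a) + \gamma\,\expect_{s'} V^{\mathcal E}(s') - V^{\mathcal E}(s)$ to split the sum into the failure terms $\sum_t \gamma^t I(s_t,a_t)$ plus the chain $\sum_t \gamma^t\big(\gamma V^{\mathcal E}(s_{t+1}) - V^{\mathcal E}(s_t)\big)$. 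The first group equals $\hat V$ by definition. The second group telescopes, and since $I\in\{0,1\}$ forces $V^{\mathcal E}(s)\le \tfrac{1}{1-\gamma}$, the tail $\gamma^t V^{\mathcal E}(s_t)\to 0$; the sum--expectation interchange is then justified by dominated convergence, and the only surviving boundary term is $-\expect_{s_0} V^{\mathcal E}(s_0) = -V^{\mathcal E}$, where I use that $\hat\pi$ and $\mathcal E$ share the same initial-state distribution. This establishes $\expect_{\tau\sim\hat\pi}\big[\sum_t\gamma^t A^{\mathcal E}(s_t,a_t)\big] = \hat V - V^{\mathcal E}$.

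To conclude I would rewrite the left-hand side by pushing the trajectory expectation inside the time sum, obtaining $\sum_{t\ge0}\gamma^t \expect_{s_t}\expect_{a\sim\hat\pi(\cdot|s_t)}[A^{\mathcal E}(s_t,a_t)]$, and then fold the per-timestep state marginals into the discounted visitation measure $P_{\hat\pi}$. This collapses $\sum_t\gamma^t(\cdots)$ into $\tfrac{1}{1-\gamma}\,\expect_{s\sim P_{\hat\pi}}\expect_{a\sim\hat\pi}[A^{\mathcal E}(s,a)]$. Equating the two expressions for the discounted advantage sum and rearranging yields exactly the claimed identity $\hat V = V^{\mathcal E} + \tfrac{1}{1-\gamma}\expect_{s\sim P_{\hat\pi}}\expect_{a\sim\hat\pi}[A^{\mathcal E}(s,a)]$.

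The hard part will not be conceptual, since this is a well-known result, but rather careful bookkeeping. Two points deserve an explicit sentence rather than being glossed over: keeping the sign convention consistent (here the value functions measure failure, not reward, so I must verify that $\hat V - V^{\mathcal E}$ emerges with the correct sign), and rigorously justifying both the telescoping boundary term and the interchange of the infinite time sum with the trajectory expectation. Boundedness of $I$ together with $\gamma<1$ makes every series absolutely convergent, so these steps are legitimate, but I would state the convergence bound explicitly to keep the derivation airtight.
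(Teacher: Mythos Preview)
Your proposal is correct and is exactly the standard telescoping proof of the Kakade--Langford performance difference lemma, specialized here to the failure indicator $I(s,a)$. The paper, for its part, does not actually prove this lemma: it simply states the identity, defines $A^{\mathcal E}(s,a) = I(s,a) + \gamma V^{\mathcal E}(s') - V^{\mathcal E}(s)$, and cites Kakade and Langford (2002), noting that the original result is stated for rewards and that here the reward is replaced by the indicator $I$. So there is no proof in the paper to compare against; your derivation is precisely the argument the cited reference would supply, and your added care about the normalization of $P_{\hat\pi}$, the sign convention, and the dominated-convergence justification for the telescoping tail is more explicit than anything the paper records.
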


Here the $P_{\hat{\pi}}$ means the states are subject to the marginal state distribution deduced by the behavior policy $\hat{\pi}$. $A^{\mathcal E}(s, a)$ is the advantage of the expert in current state action pair: $A^{\mathcal E}(s, a) = I(s, a) + \gamma V^{\mathcal{E}}(s') - V^{\mathcal{E}}(s)$ and $s' = \mathcal P(s, a)$ is the next state.
This lemma is proposed and proved by \citet{kakade2002approximately} and is useful to show the behavior policy's safety.
In the original proposition, the $V$ and $A$ represents the expected discounted return and advantage w.r.t. the reward, respectively. However, we replace the reward with the indicator $\mathcal I$ so that the value function $\hat{V}$ and $V^{\mathcal{E}}$ presenting the expected cumulative failure probability.

\begin{lemma}
\label{lemma:coverage-of-the-unsafe-actions}
Only a small subspace of the confident action space of expert covers the ground-truth unsafe actions:
$$
  \mathop{\int}_{a \in \mathcal A_\eta(s)} I(s, a)da \le \cfrac{\epsilon}{\eta}.
$$
\end{lemma}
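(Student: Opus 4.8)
The plan is to exploit the defining property of the confident action space directly. By definition $\mathcal A_\eta(s) = \{a : \mathcal E(a|s) \ge \eta\}$, so every action in this set satisfies $\mathcal E(a|s) \ge \eta$, which rearranges to $1 \le \mathcal E(a|s)/\eta$. Since the failure indicator is nonnegative, I can multiply it by this quantity that is at least one without decreasing it. This turns the unweighted integral we wish to bound into an integral weighted by the expert density, which is exactly what the assumption controls.

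Concretely, I would first write the chain
$$
\int_{a \in \mathcal A_\eta(s)} I(s, a)\, da
\le
\int_{a \in \mathcal A_\eta(s)} \cfrac{\mathcal E(a|s)}{\eta} I(s, a)\, da
=
\cfrac{1}{\eta} \int_{a \in \mathcal A_\eta(s)} \mathcal E(a|s) I(s, a)\, da,
$$
where the inequality uses $I(s,a) \ge 0$ together with $\mathcal E(a|s)/\eta \ge 1$ on $\mathcal A_\eta(s)$, and $\eta$ is pulled out as a positive constant.

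Next I would enlarge the domain of integration from $\mathcal A_\eta(s)$ back to the full action space $\mathcal A$. Because the integrand $\mathcal E(a|s) I(s, a)$ is everywhere nonnegative, dropping the restriction can only increase the integral, giving
$$
\cfrac{1}{\eta} \int_{a \in \mathcal A_\eta(s)} \mathcal E(a|s) I(s, a)\, da
\le
\cfrac{1}{\eta} \int_{a \in \mathcal A} \mathcal E(a|s) I(s, a)\, da
=
\cfrac{1}{\eta} \expect_{a \sim \mathcal E(\cdot|s)} I(s, a).
$$
Finally I would invoke Assumption~\ref{assumption:expert-failure-rate} (the expert failure bound), which states $\expect_{a \sim \mathcal E(\cdot|s)} I(s, a) \le \epsilon$ for all $s$, to conclude the desired bound $\epsilon/\eta$.

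I do not expect a genuine obstacle here, since the argument is a two-line pointwise inequality followed by a monotonicity-of-integral step. The only point requiring a little care is the domain-enlargement move: it is valid precisely because $I$ and $\mathcal E$ are nonnegative, so I would state that nonnegativity explicitly rather than leaving it implicit. It is also worth noting that the bound holds at every fixed state $s$, which is the form needed when this lemma is fed into the main theorem.
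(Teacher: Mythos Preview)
Your proposal is correct and is essentially the same argument as the paper's, just presented in the forward direction: the paper starts from the assumption $\epsilon \ge \int_{\mathcal A} \mathcal E(a|s) I(s,a)\,da$, drops the nonnegative complement to restrict to $\mathcal A_\eta(s)$, then lower-bounds $\mathcal E(a|s)$ by $\eta$ there, whereas you run these same two steps in reverse order. The content is identical.
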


\begin{proof}
According to the Assumption, we have:
\begin{equation}
\begin{aligned}
  \epsilon & \ge 
  \mathop{\int}_{a \in \mathcal A} \mathcal{E}(a|s) I(s, a)da 
  = 
  \mathop{\int}_{a \in \mathcal A_\eta(s)} \mathcal{E}(a|s) I(s, a)da +
  \mathop{\int}_{a \notin \mathcal A_\eta(s)} \mathcal{E}(a|s) I(s, a)da.
\end{aligned}
\end{equation}

Following the definition of $\mathcal A_\eta(s)$, we get $\mathcal E(a|s) \ge \eta, \forall a \in \mathcal A_\eta(s)$. Therefore:
\begin{equation}
\begin{aligned}
  \epsilon & \ge 
  \mathop{\int}_{a \in \mathcal A_\eta(s)} \eta I(s, a)da + 0
  =
  \eta \mathop{\int}_{a \in \mathcal A_\eta(s)} I(s, a)da.
\end{aligned}
\end{equation}

Therefore $\mathop{\int}_{a \in \mathcal A_\eta(s)} I(s, a)da \le \cfrac{\epsilon}{\eta}$ is hold.
\end{proof}

\begin{lemma}
\label{lemma:cumulative-probability-of-failure-of-the-expert}
	The cumulative probability of failure of the expert $V^{\mathcal{E}}(s)$ is bounded for all state:
	$$
	V^{\mathcal{E}}(s) \le \cfrac{\epsilon}{1 - \gamma}
	$$
\end{lemma}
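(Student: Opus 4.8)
The plan is to exploit the fact that the Assumption bounds the step-wise failure probability \emph{uniformly over all states}, so that the discounted sum defining $V^{\mathcal{E}}$ collapses into a geometric series. The entire argument is a one-line telescoping once the per-step bound is in place, so the work lies in justifying that bound carefully.

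First I would unfold the definition
$$
V^{\mathcal{E}}(s) = \expect_{\tau\sim\mathcal{E}}\sum_{t'=0}^{\infty}\gamma^{t'} I(s_{t'}, a_{t'}),
$$
where $s_0 = s$ and the trajectory is generated by the expert. Since every summand is nonnegative ($I\in\{0,1\}$) and the discount factors are summable ($\gamma<1$), monotone convergence lets me interchange the expectation and the sum, giving $V^{\mathcal{E}}(s) = \sum_{t'=0}^{\infty}\gamma^{t'}\,\expect_{\tau\sim\mathcal{E}}[I(s_{t'}, a_{t'})]$.

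The crux is bounding each term $\expect_{\tau\sim\mathcal{E}}[I(s_{t'}, a_{t'})]$ by $\epsilon$. I would use the tower property and condition on the visited state $s_{t'}$: for any realized state $s_{t'}$, the inner expectation over $a_{t'}\sim\mathcal{E}(\cdot|s_{t'})$ is exactly $\expect_{a\sim\mathcal{E}(\cdot|s_{t'})} I(s_{t'}, a)$, which the Assumption bounds by $\epsilon$. Because this bound is uniform in the state, taking the outer expectation over the (arbitrary) marginal distribution of $s_{t'}$ induced by the expert rollout preserves it, so $\expect_{\tau\sim\mathcal{E}}[I(s_{t'}, a_{t'})]\le\epsilon$ for every $t'$. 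Summing the geometric series then yields $V^{\mathcal{E}}(s)\le\sum_{t'=0}^{\infty}\gamma^{t'}\epsilon = \frac{\epsilon}{1-\gamma}$, holding for all $s$.

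The main (and essentially only) subtlety is this uniformity step: the per-step bound must hold no matter which state distribution the expert produces at time $t'$, and that is precisely what the ``for all states'' quantifier in the Assumption supplies. As a cross-check I could instead argue via the Bellman recursion $V^{\mathcal{E}}(s)=\expect_{a\sim\mathcal{E}(\cdot|s)}[I(s,a)+\gamma V^{\mathcal{E}}(s')]$: setting $M=\sup_s V^{\mathcal{E}}(s)$ and applying the Assumption gives $M\le\epsilon+\gamma M$, hence $M\le\frac{\epsilon}{1-\gamma}$. This variant requires knowing $M<\infty$ a priori, which again follows from the boundedness of the discounted indicator sum, so the two routes agree.
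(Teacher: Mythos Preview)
Your proposal is correct and follows essentially the same approach as the paper: unfold the definition of $V^{\mathcal{E}}$, interchange expectation and sum, bound each per-step term by $\epsilon$ via the Assumption, and sum the geometric series. Your write-up is simply more explicit about the justifications (monotone convergence, tower property), and your Bellman-recursion cross-check is a nice addition that the paper does not include.
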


\begin{proof}
\begin{equation}
	V^{\mathcal{E}}(s_t) =   
	\expect_{\mathcal{E}}[
		\sum_{t' = t}^{\infty} \gamma^{t' - t}I(s_{t'}, a_{t'})
	]
	=
	\sum_{t' = t}^{\infty}
	\gamma^{t' - t}
	\expect_{\mathcal{E}}[
		 I(s_{t'}, a_{t'})
	]
	\le
	\sum_{t' = t}^{\infty}
	\gamma^{t' - t}
	\epsilon
	=
	\cfrac{\epsilon}{1 - \gamma}
\end{equation}
\end{proof}

\textit{\textbf{Theorem.}}
We introduce the main theorem of this work, which shows that the training safety is related to the safety of the expert $\epsilon$ and the confidence level $\eta$.

\begin{theorem}[Upper bound of the training risk]
The expected cumulative probability of failure $\hat{V}$ of the behavior policy $\hat{\pi}$ in EGPO is bounded by the step-wise failure probability of the expert $\epsilon$ as well as the confidence level $\eta$:
$$
\hat{V} 
\le
\cfrac{\epsilon}{1 - \gamma}(
1 + \cfrac{1}{\eta} + \cfrac{\gamma}{1 - \gamma}K'_\eta
),
$$
wherein $
K'_\eta = 
\max_s \mathop{\int}_{a \in \mathcal A_\eta(s)} da
$ is \textbf{negatively correlated} to $\eta$.
\end{theorem}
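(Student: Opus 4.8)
The plan is to anchor the argument on the performance difference lemma (Lemma~\ref{lemma:performance-difference-lemma}), which already expresses $\hat V$ as $V^{\mathcal E}$ plus a discounted single-step advantage term. First I would bound $V^{\mathcal E} = \expect_{\tau\sim\mathcal E}V^{\mathcal E}(s_0)$ directly by Lemma~\ref{lemma:cumulative-probability-of-failure-of-the-expert}, which produces the leading $\frac{\epsilon}{1-\gamma}$ summand. Everything else must come from controlling $\frac{1}{1-\gamma}\expect_{s\sim P_{\hat\pi}}\expect_{a\sim\hat\pi}[A^{\mathcal E}(s,a)]$, so the whole problem reduces to showing that the expected expert-advantage at a fixed state is at most $\frac{\epsilon}{\eta}+\frac{\gamma\epsilon}{1-\gamma}K'_\eta$; reinserting the $\frac{1}{1-\gamma}$ prefactor then yields the second and third summands.

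The key step is to expand $\expect_{a\sim\hat\pi}[A^{\mathcal E}(s,a)]$ using the explicit form of the behavior policy $\hat\pi(a|s)={\pi}(a|s)\mathbf 1_{a\in\mathcal A_\eta(s)}+\mathcal E(a|s)F(s)$. This splits the expectation into a term integrated against the agent density over the confident set $\mathcal A_\eta(s)$ and an intervention term $F(s)\int_a \mathcal E(a|s)A^{\mathcal E}(s,a)\,da$. The second term vanishes: by the Bellman consistency of $V^{\mathcal E}$ under the expert we have $\expect_{a\sim\mathcal E}[A^{\mathcal E}(s,a)]=0$, so the takeover branch contributes nothing to the failure advantage. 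This is the conceptual crux, formalizing the intuition that once the guardian takes over the expert acts on-policy and cannot raise the advantage, and it leaves only the agent branch over $\mathcal A_\eta(s)$ to bound.

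What remains is $\int_{a\in\mathcal A_\eta(s)}\pi(a|s)A^{\mathcal E}(s,a)\,da$ with $A^{\mathcal E}(s,a)=I(s,a)+\gamma V^{\mathcal E}(s')-V^{\mathcal E}(s)$. I would discard the nonnegative $-V^{\mathcal E}(s)$ term to obtain an upper bound, then handle the two surviving pieces separately: the $I(s,a)$ piece is controlled by Lemma~\ref{lemma:coverage-of-the-unsafe-actions}, giving $\frac{\epsilon}{\eta}$, while the $\gamma V^{\mathcal E}(s')$ piece is bounded by replacing $V^{\mathcal E}(s')$ with $\frac{\epsilon}{1-\gamma}$ via Lemma~\ref{lemma:cumulative-probability-of-failure-of-the-expert} and the integral over $\mathcal A_\eta(s)$ by the volume $K'_\eta$, giving $\frac{\gamma\epsilon}{1-\gamma}K'_\eta$. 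Since these estimates hold uniformly in $s$, they survive the expectation over $P_{\hat\pi}$, and combining with the $V^{\mathcal E}$ term reproduces the claimed bound exactly. The main obstacle I anticipate is the density bookkeeping on the agent branch: passing from $\int_{\mathcal A_\eta(s)}\pi(a|s)I(s,a)\,da$ to $\int_{\mathcal A_\eta(s)}I(s,a)\,da$, and from $\int_{\mathcal A_\eta(s)}\pi(a|s)\,da$ to the volume $K'_\eta$, tacitly relies on $\pi(a|s)\le 1$, which must be justified from the setting or absorbed into the constants; aligning these two density-to-volume estimates with Lemmas~\ref{lemma:coverage-of-the-unsafe-actions} and \ref{lemma:cumulative-probability-of-failure-of-the-expert} is the delicate part of the calculation.
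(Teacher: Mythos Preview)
Your proposal is correct and follows essentially the same route as the paper's proof: apply the performance difference lemma, kill the expert-takeover branch via $\expect_{a\sim\mathcal E}[A^{\mathcal E}(s,a)]=0$, expand the remaining agent-branch advantage into the three terms $I(s,a)$, $\gamma V^{\mathcal E}(s')$, $-V^{\mathcal E}(s)$, and bound them respectively by Lemma~\ref{lemma:coverage-of-the-unsafe-actions}, Lemma~\ref{lemma:cumulative-probability-of-failure-of-the-expert} together with the volume $K'_\eta$, and nonnegativity. Your flagged concern about the tacit use of $\pi(a|s)\le 1$ when passing from density-weighted integrals to unweighted ones is exactly the step the paper takes without further comment, so you have identified the same delicate point rather than a divergence in approach.
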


\begin{proof}

We use the performance difference lemma to show the upper bound. At starting, we first decompose the advantage by splitting the behavior policy:
\begin{equation}
  \expect_{a\sim \hat{\pi}(\cdot|s)} A^{\mathcal E}(s, a) 
  =
\int_{a \in \mathcal A}
    {\pi}(a|s) \mathbf 1_{a \in \mathcal A_\eta(s)} A^{\mathcal E}(s, a)
    da
+
\int_{a \in \mathcal A} \mathcal{E}(a|s) F(s) A^{\mathcal E}(s, a) da
\end{equation}

The second term is equivalent to $F(s) \expect_{a\sim\mathcal{E}}[ A^{\mathcal E}(s, a)]$, which is equal to zero, according to the definition of advantage.
So we only need to compute the first term.
Firstly we split the integral over whole action space into the confident action space and non-confident action space (which removed by the $\mathbf 1$ operation), then we expand the advantage into detailed form, we have:
\begin{equation}
\begin{aligned}
& \expect_{a\sim \hat{\pi}(\cdot|s)} A^{\mathcal E}(s, a) 
 =   \mathop{\int}_{a \in \mathcal A_\eta(s)} {\pi}(a|s) A^{\mathcal E}(s, a)da ~\\
& = \mathop{\int}_{a \in \mathcal A_\eta(s)} {\pi}(a|s) [
	I(s, a) + \gamma V^{\mathcal{E}}(s') - V^{\mathcal{E}}(s)
	] da ~\\
& =
\underbrace{
\mathop{\int}_{a \in \mathcal A_\eta(s)} {\pi}(a|s) I(s, a)da
}_\text{(a)}
+
\underbrace{
\mathop{\int}_{a \in \mathcal A_\eta(s)} {\pi}(a|s) \gamma V^{\mathcal{E}}(s') da
}_\text{(b)}
-
\underbrace{
\mathop{\int}_{a \in \mathcal A_\eta(s)} {\pi}(a|s) V^{\mathcal{E}}(s) da
}_\text{(c)}
~\\
\end{aligned}
\end{equation}

Following the Lemma~\ref{lemma:coverage-of-the-unsafe-actions}, the term (a) can be bounded as:

\begin{equation}
  \mathop{\int}_{a \in \mathcal A_\eta(s)} {\pi}(a|s) I(s, a)da
  \le
  \mathop{\int}_{a \in \mathcal A_\eta(s)} I(s, a)da
  \le
  \cfrac{\epsilon}{\eta}
\end{equation}

Following the Lemma~\ref{lemma:cumulative-probability-of-failure-of-the-expert}, the term (b) can be written as:

\begin{equation}
  \mathop{\int}_{a \in \mathcal A_\eta(s)} {\pi}(a|s) \gamma V^{\mathcal{E}}(s') da
  \le
  \gamma \mathop{\int}_{a \in \mathcal A_\eta(s)} V^{\mathcal{E}}(s') da
  \le
  \cfrac{\gamma\epsilon}{1 - \gamma} \mathop{\int}_{a \in \mathcal A_\eta(s)} da
  =
  \cfrac{\gamma \epsilon}{1 - \gamma} K_\eta,
\end{equation}
wherein $K_\eta = \mathop{\int}_{a \in \mathcal A_\eta(s)}da$ denoting the area of feasible region in the action space. It is a function related to the expert and $\eta$. If we tighten the guardian by increasing $\eta$, the confident action space determined by the expert $\mathcal A_\eta(s)$ will shrink and the $K_\eta$ will decrease. \textbf{Therefore $K_\eta$ is negatively correlated to $\eta$}.
The term (c) is always non-negative, so after applying the minus to term (c) will make it always $\le 0$.

Aggregating the upper bounds of three terms, we have the bound on the advantage:

\begin{equation}
\label{equation:bound-of-advantage}
\expect_{a\sim \hat{\pi}} A^{\mathcal E}(s, a) 
\le
\cfrac{\epsilon}{\eta}
+
\cfrac{\gamma \epsilon}{1 - \gamma} K_\eta
\end{equation}

Now we put Eq.~\ref{equation:bound-of-advantage} as well as Lemma~\ref{lemma:cumulative-probability-of-failure-of-the-expert} into the performance difference lemma (Lemma~\ref{lemma:performance-difference-lemma}), we have:

\begin{equation}
\begin{aligned}
\hat{V} 
& = 
V^{\mathcal{E}} + 
\cfrac{1}{1-\gamma}
\ \expect_{s\sim P_{\hat{\pi}}} \ 
\expect_{a\sim \hat{\pi}} \ 
[ A^{\mathcal E}(s, a) ]
~\\
& \le
\cfrac{\epsilon}{1 - \gamma}
+
\cfrac{1}{1 - \gamma}
[
\cfrac{\epsilon}{\eta}
+
\cfrac{\gamma \epsilon}{1 - \gamma} K'_\eta
] ~\\
& =
\cfrac{\epsilon}{1 - \gamma}[
1 + \cfrac{1}{\eta} + \cfrac{\gamma}{1 - \gamma}K'_\eta
].
\end{aligned}
\end{equation}

Here we have $
K'_\eta = 
\max_s \mathop{\int}_{a \in \mathcal A_\eta(s)} da
$
.
Now we have proved the upper bound of the cumulative probability of failure for the behavior policy in EGPO.

\end{proof}

\section{Detail on Simulator and the Safe Driving Environments}

The MetaDrive simulator is implemented based on Panda3D~\cite{goslin2004panda3d} and Bullet Engine that has high efficiency as well as accurate physics-based 3D kinetics.
Some traffic cones and broken vehicles (with warning triangles) are scattered in the road network, as shown in Fig.~\ref{figure_appendix:environment}.
Collision to any object raises an environmental cost $+1$.
The cost signal can be used to train agents or to evaluate the safety capacity of the trained agents.

In all environments, the observation of vehicle contains (1) current states such as the steering, heading, velocity and relative distance to boundaries \textit{etc.}, (2) the navigation information that guides the vehicle toward the destination, and (3) the surrounding information encoded by a vector of length of 240 Lidar-like cloud points with $50 m$ maximum detecting distance measures of the nearby vehicles.

\begin{figure}[H]
\begin{minipage}{0.45\textwidth}
    \centering
    \includegraphics[width=0.95\linewidth]{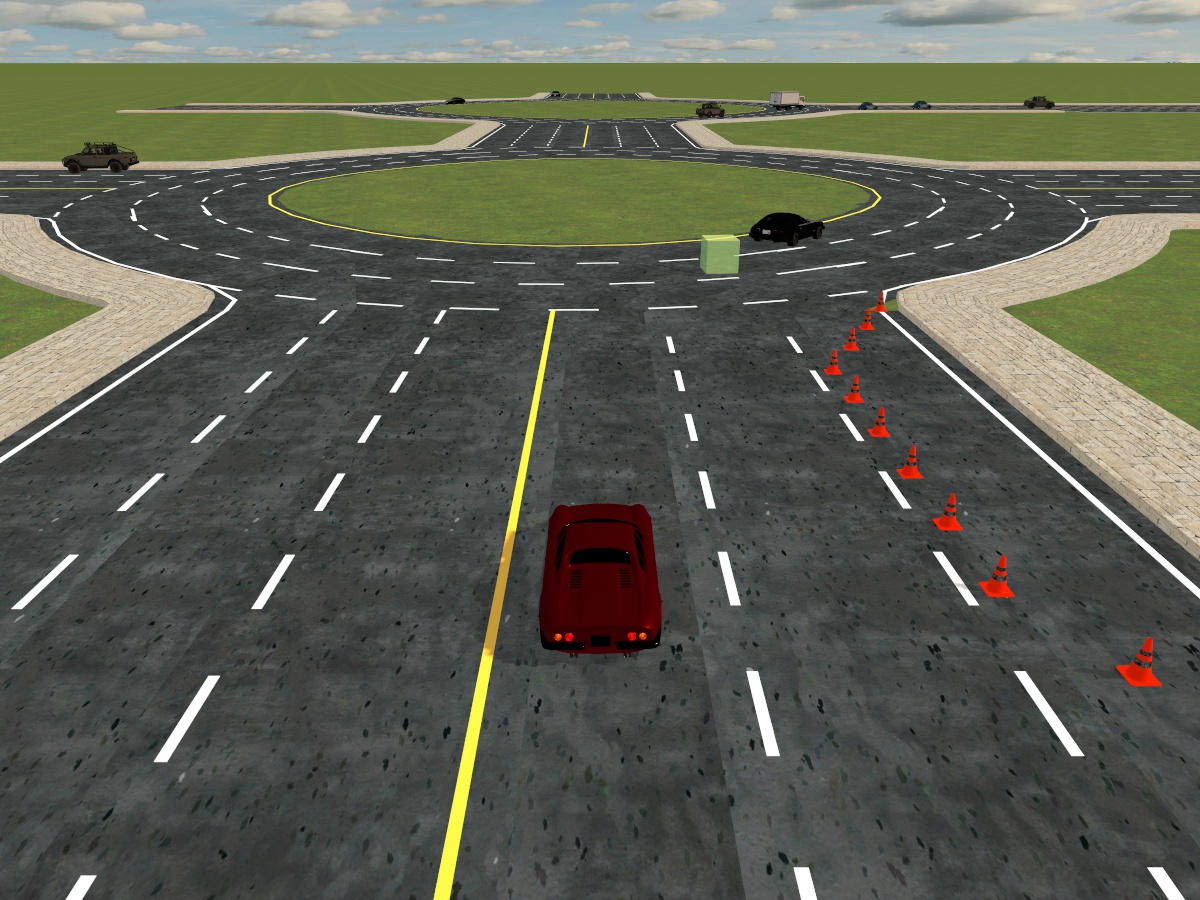}
\end{minipage}
\hfill
\begin{minipage}{0.45\textwidth}
    \centering
    \includegraphics[width=0.95\linewidth]{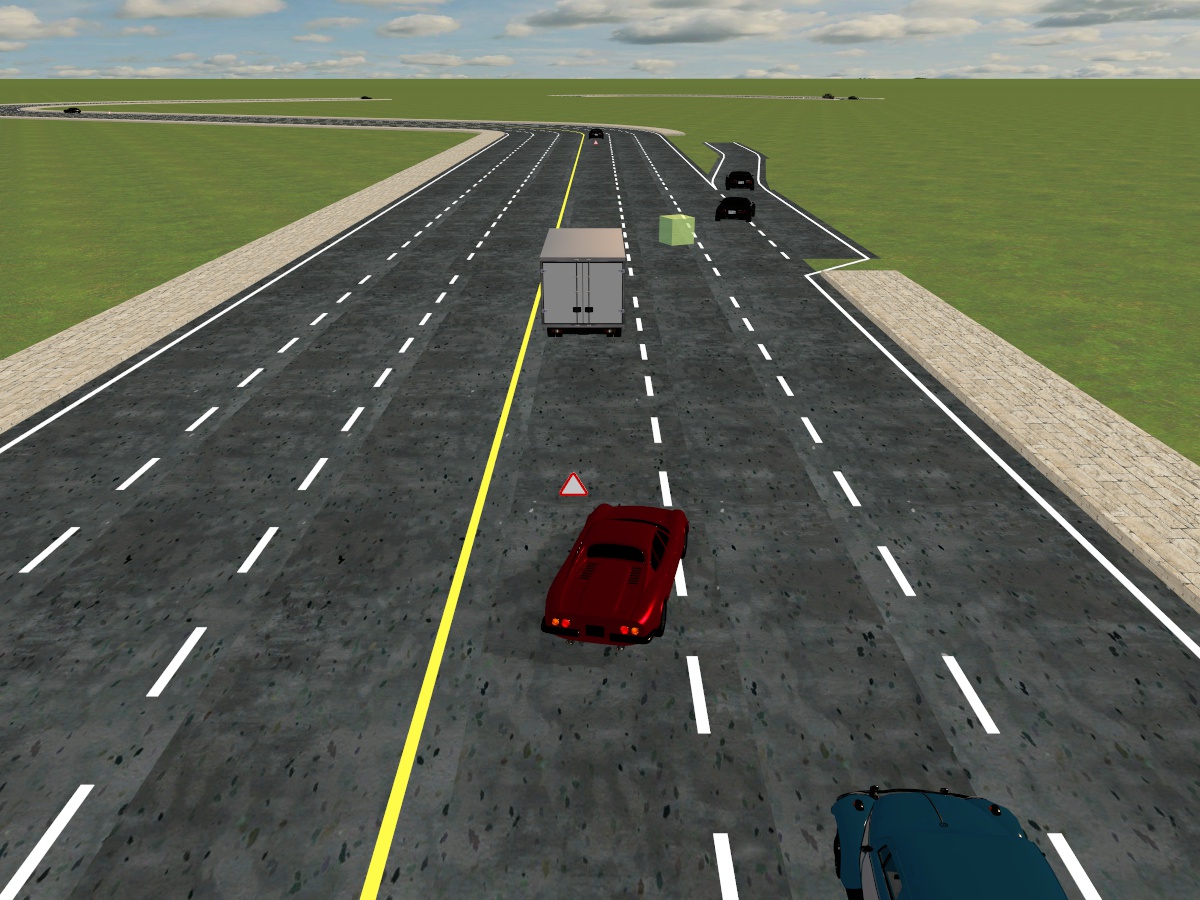}
\end{minipage}~\\
\vskip 1em
~\\
\begin{minipage}{0.45\textwidth}
    \centering
    \includegraphics[width=0.95\linewidth]{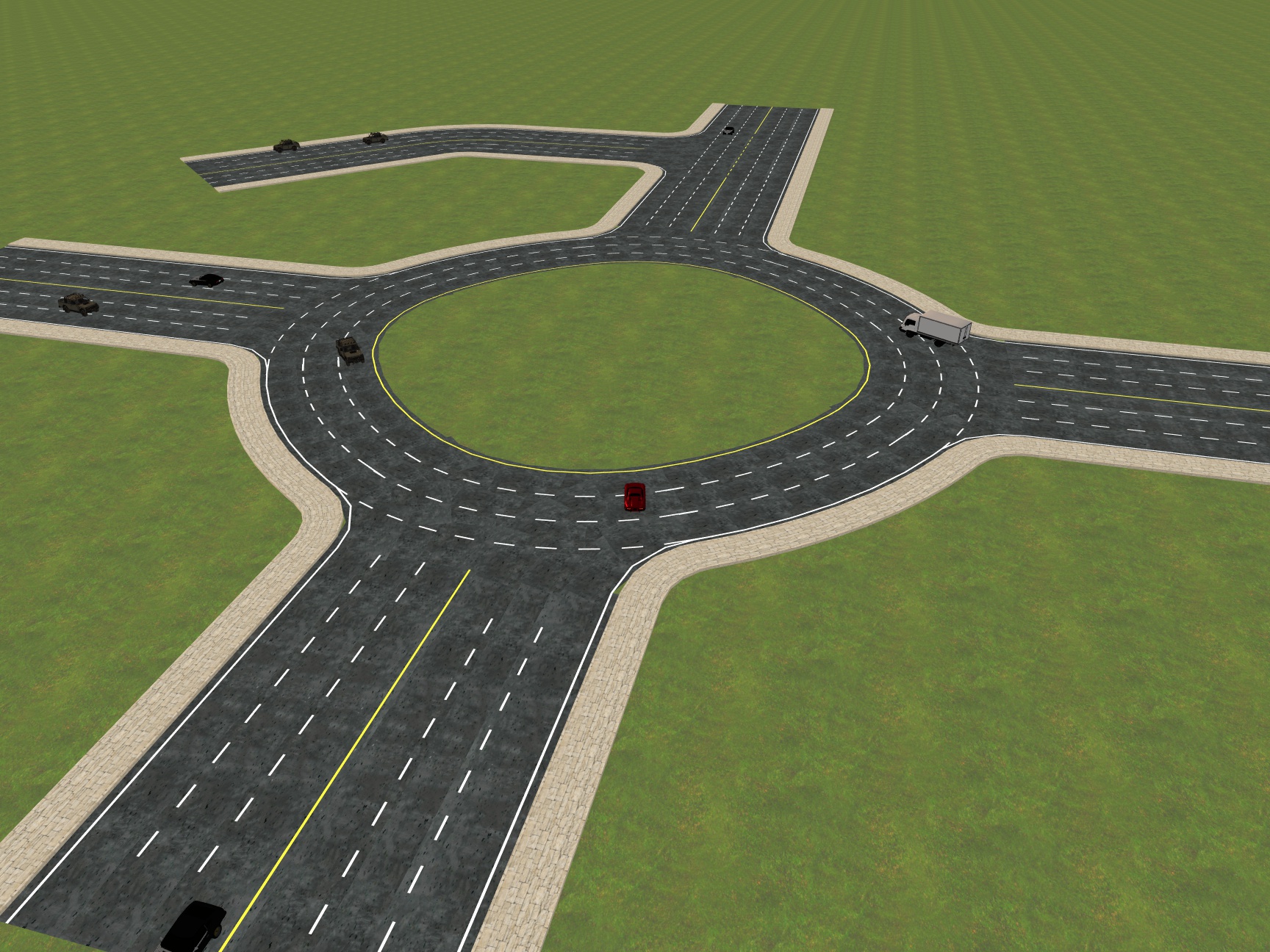}
\end{minipage}
\hfill
\begin{minipage}{0.45\textwidth}
    \centering
    \includegraphics[width=0.95\linewidth]{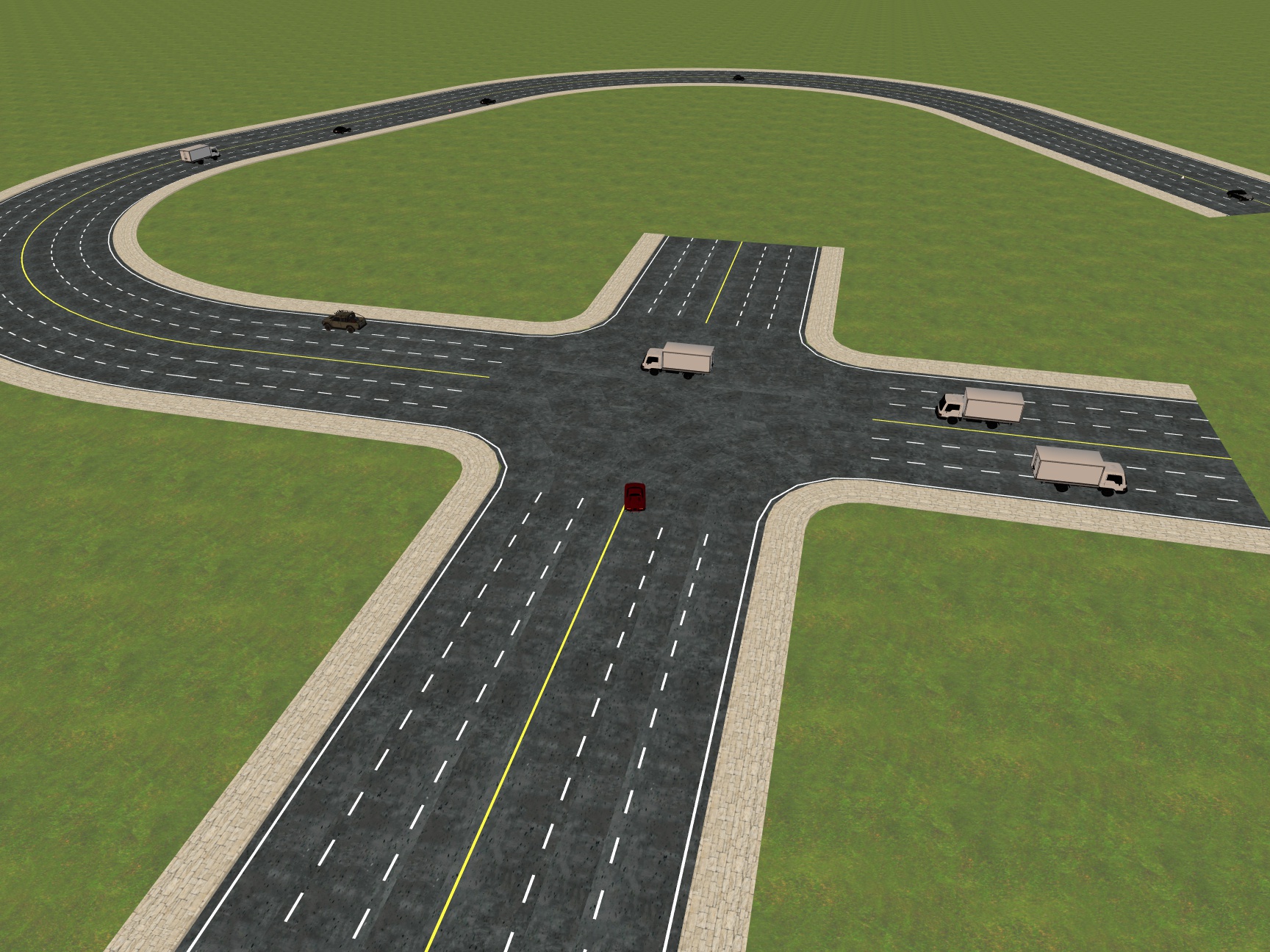}
\end{minipage}
\caption{The demonstrations of generated safety environments.}
\label{figure_appendix:environment}
\end{figure}

\section{Learning Curves}

Fig.~\ref{fig:safel_rl_all_results} and Fig.~\ref{fig:offline_all_results} present the detailed learning curves of different approaches. Note that in CQL, the first 200,000 steps is for warming up and it uses the behavior cloning to train. In each DAgger iteration, a mixed policy will explore the environment and collect new data aggregated into the dataset. The mixed policy chooses action following $a_{\text{mixed}} = \beta a_{\text{expert}} + (1 - \beta) a_{\text{agent}}$, where the parameter $\beta$ anneals from 1 to 0 during training. Therefore DAgger agent achieves high training success rate at the beginning.
In DAgger experiment, we only plot the result after each DAgger iteration.

We find that EGPO achieves expert-level training success rate at the very beginning of the training, due to the takeover mechanism. Besides, the test success rate improves drastically and achieves similar results as the expert. On the contrary, other baselines show inferior training efficiency.

In term of safety, due to the guardian mechanism, EGPO can constrain the training cost to a minimal value. Interestingly, during test time, EGPO agent shows even better safety compared to the expert.
However, according to the main table in paper and the curves in Fig.~\ref{fig:offline_all_results}, BC agent can achieve lower cost than EGPO agent. We find that the reason is because BC agent drives the vehicle conservatively in low velocity, while EGPO agent drives more naturally with similar velocity as the expert.

\begin{figure}[H]
\centering
\includegraphics[width=\textwidth]{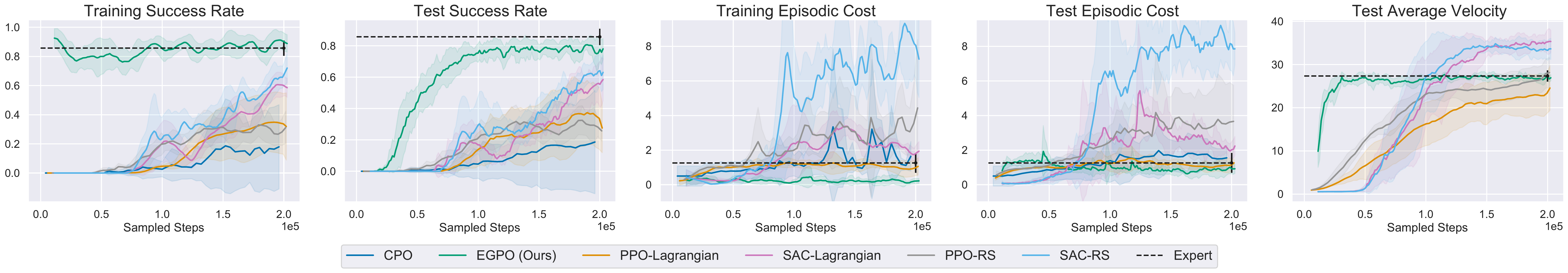}
\caption{Detailed learning curves of EGPO and Safe RL baselines.}
\label{fig:safel_rl_all_results}
\end{figure}

\begin{figure}[H]
\centering
\includegraphics[width=0.54\textwidth]{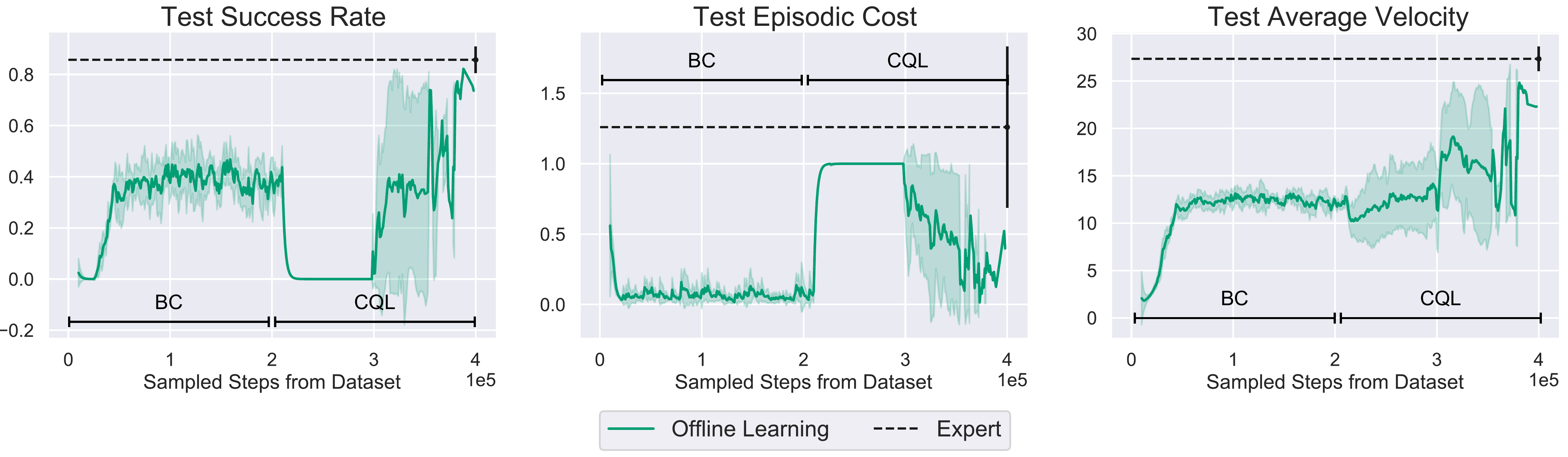}
\hfill
\includegraphics[width=0.384\textwidth]{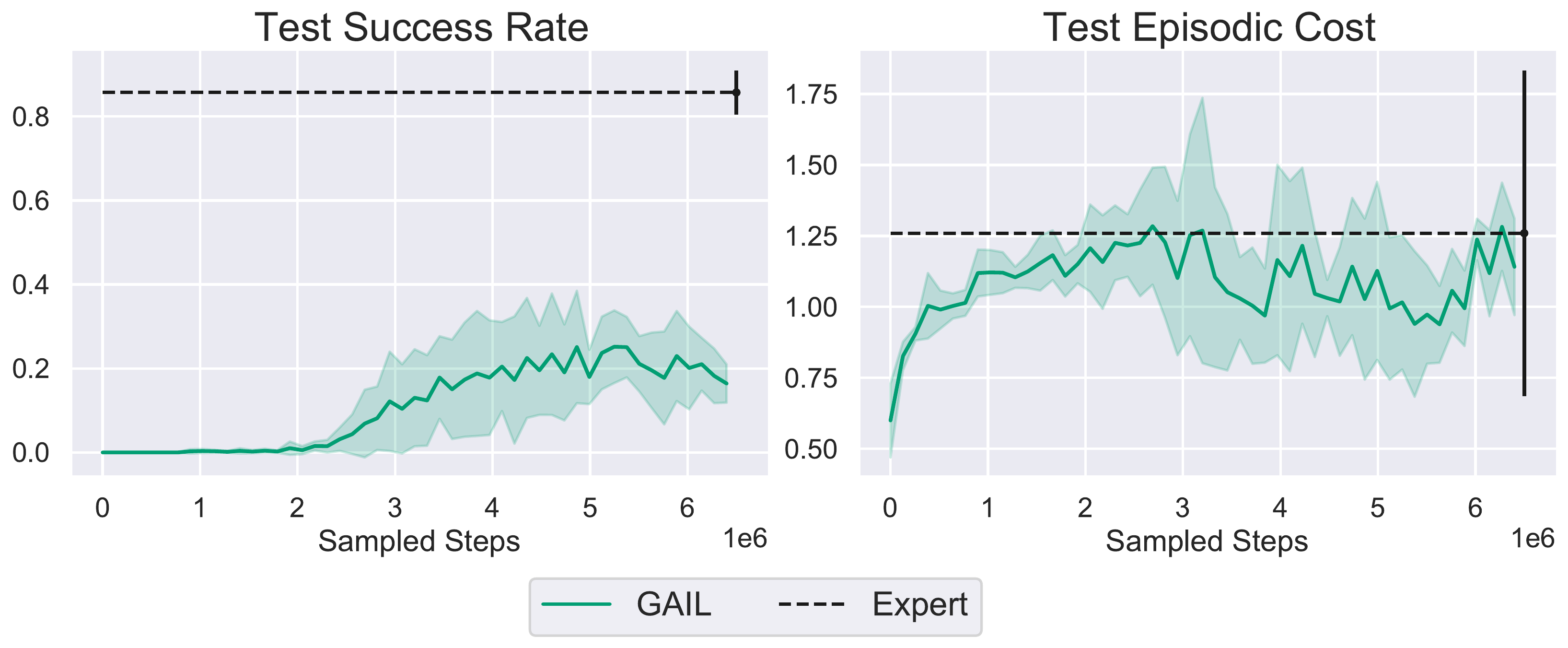}\\
\includegraphics[width=\textwidth]{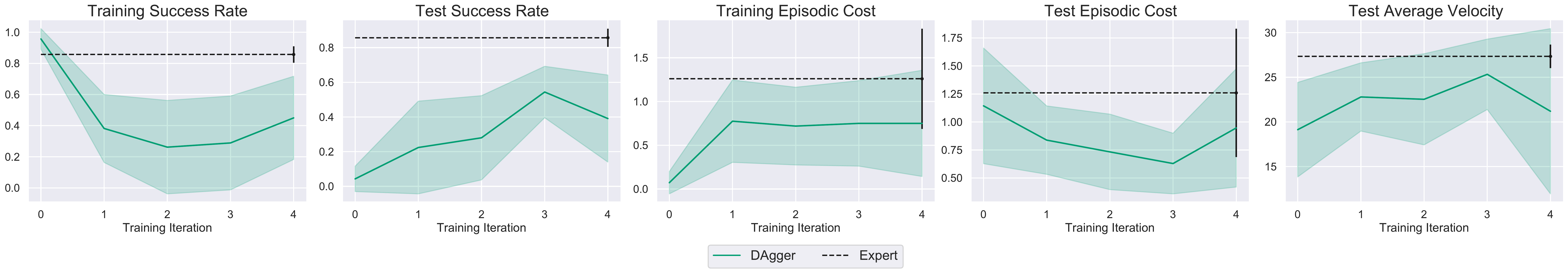}
\caption{Detailed learning curves of BC, CQL, GAIL and DAgger.}
\label{fig:offline_all_results}
\end{figure}

\section{Hyper-parameters}

\begin{table}[H]
\begin{minipage}{0.45\linewidth}
\centering
\caption{EGPO}
\begin{tabular}{@{}ll@{}}
\toprule
Hyper-parameter             & Value  \\ \midrule
Discounted Factor $\gamma$   & 0.99  \\
$\tau$ for target network update & 0.005 \\
Learning Rate               & 0.0001 \\ 
Environmental horizon $T$ & 1500 \\
Steps before Learning start & 10000\\
Intervention Occurrence Limit $C$ & 20\\
Number of Online Evaluation Episode & 5 \\
$K_p$  & 5 \\
$K_i$  & 0.01 \\
$K_d$ & 0.1 \\
CQL Loss Temperature $\beta$ & 3.0 \\
\bottomrule
\end{tabular}
\end{minipage}\hfill
\begin{minipage}{0.45\linewidth}
\centering
\caption{PPO/PPO-Lag}
\begin{tabular}{@{}ll@{}}
\toprule
Hyper-parameter             & Value  \\ \midrule
KL Coefficient              & 0.2    \\
$\lambda$ for GAE~\cite{schulman2018highdimensional} & 0.95 \\
Discounted Factor $\gamma$   & 0.99  \\
Number of SGD epochs   & 20     \\
Train Batch Size & 2000 \\
SGD mini batch size & 100 \\
Learning Rate               & 0.00005 \\ 
Clip Parameter $\epsilon$ & 0.2 \\
\midrule
Cost Limit for PPO-Lag & 1 \\
\bottomrule
\end{tabular}
\end{minipage}
\end{table}

\begin{table}[H]
\begin{minipage}{0.45\linewidth}
\centering
\caption{SAC/SAC-Lag/CQL}
\begin{tabular}{@{}ll@{}}
\toprule
Hyper-parameter             & Value  \\ \midrule
Discounted Factor $\gamma$   & 0.99  \\
$\tau$ for target network update & 0.005 \\
Learning Rate               & 0.0001 \\ 
Environmental horizon $T$ & 1500 \\
Steps before Learning start & 10000\\
\midrule
Cost Limit for SAC-Lag & 1 \\
\midrule
BC iterations for CQL & 200000 \\
CQL Loss Temperature $\beta$ & 5 \\
Min Q Weight Multiplier & 0.2 \\
\bottomrule
\end{tabular}
\end{minipage}\hfill
\begin{minipage}{0.45\linewidth}
\centering
\caption{BC}
\begin{tabular}{@{}ll@{}}
\toprule
Hyper-parameter             & Value  \\ \midrule
Dataset Size & 250000 \\
SGD Batch Size & 32 \\
SGD Epoch & 200000 \\
Learning Rate & 0.0001\\
\bottomrule
\end{tabular}
\end{minipage}
\end{table}

\begin{table}[H]
\begin{minipage}{0.45\linewidth}
\centering
\caption{DAgger}
\begin{tabular}{@{}ll@{}}
\toprule
Hyper-parameter             & Value  \\ \midrule
SGD Batch Size & 64 \\
SGD Epoch &  2000 \\
Learning Rate & 0.0005 \\
Number of DAgger Iteration & 5 \\
Initial $\beta$ & 0.3 \\
Batch Size to Aggregate & 5000 \\
\bottomrule
\end{tabular}
\end{minipage}\hfill
\begin{minipage}{0.45\linewidth}
\centering
\caption{GAIL}
\begin{tabular}{@{}ll@{}}
\toprule
Hyper-parameter             & Value  \\ \midrule
Dataset Size & 250000 \\
SGD Batch Size & 64 \\
Sample Batch Size &  12800 \\
Generator Learning Rate & 0.0001 \\
Discriminator Learning Rate & 0.005 \\
Generator Optimization Epoch & 5 \\
Discriminator Optimization Epoch & 2000 \\
Clip Parameter $\epsilon$ & 0.2 \\
\bottomrule
\end{tabular}
\end{minipage}
\end{table}


\end{document}